\documentclass[letterpaper]{article} 
\usepackage{aaai2026}  
\usepackage{times}  
\usepackage{helvet}  
\usepackage{courier}  
\usepackage[hyphens]{url}  
\usepackage{graphicx} 
\usepackage{natbib}  
\usepackage{caption} 
\usepackage{algorithm}
\usepackage{algorithmic}

\usepackage{soul}
\usepackage{url}
\usepackage{amsmath}
\usepackage{amsthm}
\usepackage{booktabs}
\usepackage{subfig}
\usepackage{multirow}
\usepackage{bm}
\usepackage{amssymb}
\usepackage{enumitem}
\usepackage{pifont}

\newtheorem{theorem}{Theorem}

\newtheorem{assumption}{Assumption}
\usepackage{newfloat}
\usepackage{listings}
\DeclareCaptionStyle{ruled}{labelfont=normalfont,labelsep=colon,strut=off} 
\floatstyle{ruled}
\newfloat{listing}{tb}{lst}{}
\floatname{listing}{Listing}
\title{Towards OOD Generalization in Dynamic Graphs via Causal Invariant Learning}
\author{
    Xinxun Zhang\textsuperscript{\rm 1,2,3}, Pengfei Jiao\textsuperscript{\rm 1,2}, Mengzhou Gao\textsuperscript{\rm 1}\thanks{Corresponding author}, Tianpeng Li\textsuperscript{\rm 4}, Xuan Guo\textsuperscript{\rm 4}
}
\affiliations{
    \textsuperscript{\rm 1}School of Cyberspace, Hangzhou Dianzi University, China\\
    \textsuperscript{\rm 2}State Key Laboratory of AI Safety, Beijing\\
    \textsuperscript{\rm 3}School of Computer Science, Wuhan University, China\\
    \textsuperscript{\rm 4}College of Intelligence and Computing, Tianjin University, China

   xxzhangstu@whu.edu.cn, \{pjiao, mzgao\}@hdu.edu.cn, \{ltpnimeia, guoxuan\}@tju.edu.cn,
}

\usepackage{bibentry}

\begin{document}

\maketitle

\begin{abstract}
Although dynamic graph neural networks (DyGNNs) have demonstrated promising capabilities, most existing methods ignore out-of-distribution (OOD) shifts that commonly exist in dynamic graphs. Dynamic graph OOD generalization is non-trivial due to the following challenges: 1) Identifying invariant and variant patterns amid complex graph evolution, 2) Capturing the intrinsic evolution rationale from these patterns, and 3) Ensuring model generalization across diverse OOD shifts despite limited data distribution observations. Although several attempts have been made to tackle these challenges, none has successfully addressed all three simultaneously, and they face various limitations in complex OOD scenarios. To solve these issues, we propose a \textbf{Dy}namic graph \textbf{C}ausal \textbf{I}nvariant \textbf{L}earning (DyCIL) model for OOD generalization via exploiting invariant spatio-temporal patterns from a causal view. Specifically, we first develop a dynamic causal subgraph generator to identify causal dynamic subgraphs explicitly. Next, we design a causal-aware spatio-temporal attention module to extract the intrinsic evolution rationale behind invariant patterns. Finally, we further introduce an adaptive environment generator to capture the underlying dynamics of distributional shifts. Extensive experiments on both real-world and synthetic dynamic graph datasets demonstrate the superiority of our model over state-of-the-art baselines in handling OOD shifts. 
\end{abstract}


\section{Introduction}

Dynamic graphs are ubiquitous across various domains \cite{yang2020relation,zhao2019t,ijcai2025p347}. Many dynamic graph neural networks (DyGNNs) methods have been proposed to learn meaningful node embedding, achieving remarkable success in various downstream tasks \cite{huang2022dlp,huang2023TGB,wu2024feasibility,zhang2024tifs}. Despite their great progress, most existing methods \cite{seo2018structured,pareja2020evolvegcn,sankar2020dysat,zhong2024efficient} are based on the data assumption of independent identically distributed (IID), where the training and test data share the same distribution. However, in real-world dynamic graphs, distribution shifts are common due to the confounding environmental biases in their generation process. Most DyGNNs don't consider the effect of confounding biases from unstable and variant environments, limiting their ability to capture invariant spatio-temporal patterns and generalize to dynamic graphs with out-of-distribution (OOD) shifts. 



To tackle this issue, and inspired by recent research in invariant learning \cite{arjovsky2019invariant,krueger2021VREx,wu2022dir}, we argue that the causal patterns in dynamic graphs to the labels remain stable and invariant across different environments. Therefore, we study generalized dynamic graph representation learning under OOD shifts by effectively capturing invariant spatio-temporal patterns from a causal view. However, it is non-trivial due to the following key challenges: (1) Identification: How can we distinguish between invariant and variant patterns in dynamic graphs amid complex evolution? (2) Extraction: How can we extract the intrinsic stable evolution rationale from the identified patterns? (3) Generalization: How can we enable the model to generalize to a wide range of unseen data with distribution shifts in the presence of limited observable data?

Several recent attempts have been made to address these challenges \cite{zhang2022dynamic,zhang2023spectral,yuan2023eagle,yang2024improving, tieu2025oodlinker}. However, they suffer from various issues and limitations. First, DIDA \cite{zhang2022dynamic}, SILD \cite{zhang2023spectral}, and EAGLE \cite{yuan2023eagle} fail to identify causal dynamic subgraphs or explicitly learn invariant patterns, which are crucial for understanding the mechanisms behind OOD generalization in dynamic graphs and providing better interpretability. In addition, DIDA and SILD extract evolution rationales using the entire dynamic graph, which makes the evolution rationales that they extract unstable since the dynamic graph contains environmental biases. On the other hand, EAGLE and EpoD \cite{yang2024improving} overlook the evolutionary rationales of invariant patterns. Lastly, DIDA, SILD, and EpoD generate discrete and limited environment instances by using observable graph snapshots for interventions, while OOD-linker \cite{tieu2025oodlinker} overlooks the role of environmental factors in dynamic graphs. Both limitations weaken their generalization ability in more challenging dynamic OOD scenarios. EAGLE enhances its generalization capability by inferring environment distributions to generate sufficient instances, but it requires additional labels (a mixture of time index and environment index), which limits its scalability in various dynamic OOD scenarios. 



To address the limitations of these methods while simultaneously tackling the three challenges as a whole, we first construct a structural causal model (SCM) based on the generation process of dynamic graphs to explore the causal effects among various factors. Then, we instantiate a novel \textbf{Dy}namic graph \textbf{C}ausal \textbf{I}nvariant \textbf{L}earning (DyCIL) model based on the causal relationships to handle OOD shifts in dynamic graphs via exploiting invariant spatio-temporal patterns from a causal view. DyCIL can effectively capture the invariant spatio-temporal patterns to label remain stable across various latent environments through jointly optimizing three mutually promoting modules. Specifically, we propose a dynamic causal subgraph generator to explicitly identify causal dynamic subgraphs, offering deeper insights and enhanced interpretability into the underlying mechanisms of dynamic graphs OOD generalization. Then, we design a causal-aware spatio-temporal attention module for learning representations capable of OOD generalization under distribution shifts by exploiting the intrinsic evolution rationale of causal dynamic subgraphs. This endows DyCIL with a higher capability to learn invariant spatio-temporal patterns. Lastly, we introduce an adaptive environment generator that adaptively infers environment distribution without requiring additional information. This enables DyCIL to generate ample continuous environment instances to capture the underlying dynamics of distributional shifts, thereby improving the model's generalization ability across complex OOD scenarios. Contributions of our work are summarized as follows: 
\begin{itemize}[leftmargin=*]
    \item We introduce DyCIL, a novel model designed for OOD generalization in dynamic graphs from a causal perspective. DyCIL can learn invariant spatio-temporal patterns and generalize well in dynamic scenarios with various OOD shifts.
    \item We develop multiple effective modules enabling DyCIL to handle OOD shifts, including a dynamic causal subgraph generator that identifies causal dynamic subgraphs, a causal-aware spatio-temporal attention module, and an adaptive environment generator to enable the model to learn generalized invariant representations under various OOD shifts.
    \item Extensive experiments on both real-world datasets and synthetic datasets demonstrate the superiority of our proposed model over state-of-the-art methods on different dynamic graph prediction tasks with OOD shifts.
\end{itemize}

\section{Related Work}

DyGNNs model evolving structures and temporal dependencies in real-world systems \cite{huang2022dlp,huang2023TGB,ijcai2025p1166}. Existing methods include continuous-time models \cite{nguyen2018continuous,jin2022neural,rossi2020temporal} that process event streams with time encodings or memory modules, and discrete-time models \cite{sankar2020dysat,pareja2020evolvegcn,zhang2023dyted} that represent dynamic graphs as timestamped snapshots using GNNs and sequence models. While effective for temporal representation learning, they overlook OOD shift. To enhance robustness, graph OOD generalization has been explored through disentanglement-based methods \cite{yang2020factorizable,liu2020independence,li2022disentangled} that separate invariant and variant factors, and causality-based methods \cite{wu2022dir,li2022learning,jia2024graph} that exploit stable causal mechanisms across environments. Motivated by these advances, recent studies extend OOD generalization to dynamic graphs. DIDA \cite{zhang2022dynamic} and SILD \cite{zhang2023spectral} address OOD shifts from the perspective of decoupled learning, while EAGLE \cite{yuan2023eagle} and EpoD \cite{yang2024improving} enhance generalization by modeling latent environments. OOD-Linker \cite{tieu2025oodlinker} further improves invariance by leveraging the information bottleneck principle.

\section{Preliminary Analysis}

\subsection{A Causal View on Dynamic Graphs} \label{sec3.1}

We first take a causal look at the generating process of the dynamic graph and construct SCM \cite{pearl2000models} in Figure \ref{fig_scm}. It shows the causalities among six variables, where each directed link from one variable to another denotes a cause-effect relationship between variables. The key explanations regarding SCM are as follows:

\begin{itemize}[leftmargin=*]
    \item ${C}_t \bm{\rightarrow} \mathcal{G}_t \bm{\leftarrow} {E}_t$: The observed dynamic graph data is generated by two unobserved latent variables: the causal factor ${C}_t$ and the environment factor ${E}_t$. While ${C}_t$ denotes the genuine causal relationship, ${E}_t$ is the unstable variant patterns arising from changes in external variables over time. For example, living and workplace will affect a node's future status in social networks. 
    
    \item ${S}_t \bm{\rightarrow}{C}_t \bm{\leftarrow} {T}_{t^{'}}$: For a dynamic graph, the causal factor ${C}_t$ consist of spatial factor ${S}_t$ and temporal factor ${T}_{t'}$, where ${S}_t$ denotes the topological information at timestamp $t$ and ${T}_{t^{'}}(t' < t)$ represents the temporal evolution information from historical snapshots.

    \item ${C}_t \bm{\rightarrow} \mathcal{G}_t \bm{\rightarrow} {Y}_t$: This link indicates that the causal factor ${C}_t$ is the only endogenous parent to determine the generation of label ${Y}_t$ of $\mathcal{G}_t$ in dynamic graphs. 
    
    \item ${E}_t \bm{\rightarrow} {C}_t$: This link means that a spurious correlation exists between causal factor ${C}_t$ and environment factor ${E}_t$, which usually changes over time in dynamic graphs. 
    
\end{itemize}

According to the SCM of dynamic graphs, we recognize a backdoor path between $C_t$ and $Y_t$, \textit{i.e.}, ${C}_t \bm{\leftarrow} {E}_t \bm{\rightarrow} \mathcal{G}_t \bm{\rightarrow} {Y}_t$, which will cause a misleading correlation between ${C}_t$ and ${Y}_t$. Furthermore, dynamic graphs collected from different environments exhibit various confounding biases. 
The spatio-temporal patterns extracted using confounding biases are unstable, leading to the failure of most DyGNNs to generalize to OOD shift scenarios.


\begin{figure}[tbp]
\centering
\includegraphics[width=0.8\columnwidth]{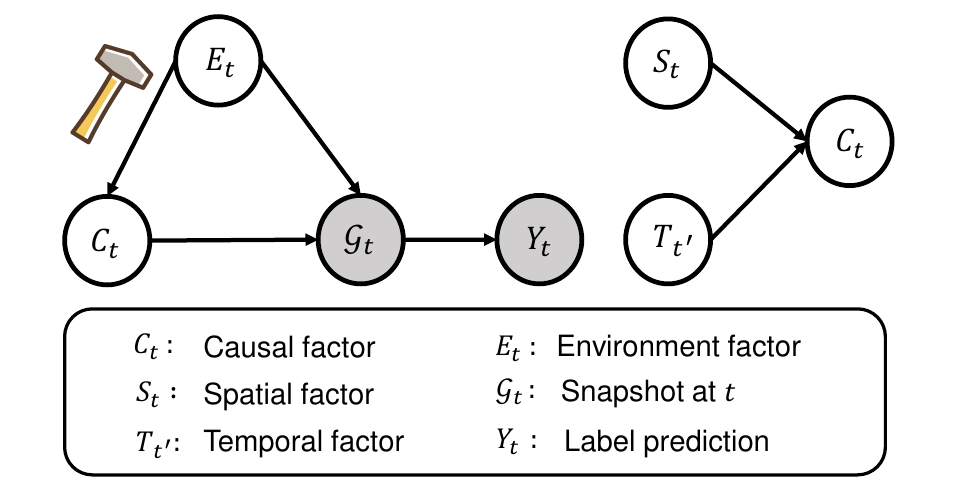}
\caption{Structural causal model for the generating process of the dynamic graph, where grey and white variables represent observed variables and unobserved, respectively.}
\label{fig_scm}    
\end{figure}

\subsection{Backdoor Adjustment} To ensure that predictions in dynamic graphs are based solely on causal factor $C_t$, it is imperative to tackle the confounding bias effect from the environment factor ${E}_t$. To achieve it, we utilize causal tools \textit{do-calculus} on the variable ${C}_t$ to eliminate the backdoor path between ${C}_t$ and ${Y}_t$, a process commonly referred to as backdoor adjustment \cite{pearl2014interpretation}. By doing so, the probability distribution $P({Y}_t|do({C}_t)$ can be estimated without interference from ${E}_t$. Specifically, we can estimate $P({Y}_t|do({C}_t))$ by stratifying ${E}_t$ between ${C}_t$ and ${Y}_t$,
\begin{equation}
    P({Y}_t|do({C}_t)) = \sum_{e_{i} \in {E}_t} P({Y}_t|{S}_t, {T}_{t'}, {E}_{t}=e_{i}) P({E}_{t} =e_{i})
\end{equation}
where $e_i$ denotes the discrete component of environment factor $E_t$. Detailed derivations are provided in Appendix \ref{app_causal}. 

\section{Model Instantiations} \label{model}
A dynamic graph can be denoted as a series of snapshots $\mathcal{G}_{1:T}=\left\{\mathcal{G}_1, \mathcal{G}_2,\cdots, \mathcal{G}_T\right\}$, where $T$ is the total number of snapshots. $\mathcal{G}_t$ denotes the snapshot at timestamp $t$, which is a graph with a node-set $\mathcal{V}_t$, an edge-set $\mathcal{E}_t$ and feature matrix $ \mathbf{X}_t$, \textit{i.e.}, $\mathcal{G}_t=(\mathcal{V}_t, \mathcal{E}_t,  \mathbf{X}_t)$. We denote $\mathbf{A}_t$ as the adjacency matrix of $\mathcal{G}_t$.In this paper, we mainly focus on dynamic prediction tasks, which aim to utilize previous snapshot sequences to predict future states. \textit{i.e.}, $P({Y}_{t}|\mathcal{G}_{1}, \mathcal{G}_2,\cdots, \mathcal{G}_t)$, where ${Y}_{t}$ denote the node properties or interactive relationship at timestamp ${t+1}$. To tackle the OOD generalization challenge in dynamic graphs, we propose a model named DyCIL, which implements the aforementioned backdoor adjustment mechanism. The overall architecture of DyCIL is illustrated in Figure~\ref{fig_model}.

\subsection{Dynamic Causal Subgraph Generator}
Based on the above analysis, we know that dynamic graphs are generated from both causal factors and environmental factors. Therefore, we assume that each dynamic graph $\mathcal{G}_{1:T}$ has a causal dynamic subgraph $\mathcal{G}^{c}_{1:T}=\left\{\mathcal{G}_1^c, \mathcal{G}_2^c,\cdots, \mathcal{G}_T^c\right\}$ whose correlation with the label is invariant across different environments. We denote the complement of the causal dynamic subgraph as the variant dynamic subgraph $\mathcal{G}^e_{1:T}= \mathcal{G}_{1:t} \setminus \mathcal{G}_{1:t}^{c} = \left\{\mathcal{G}_1^e, \mathcal{G}_2^e,\cdots, \mathcal{G}_T^e\right\}$, whose relationship with the label is variant across different environments.

We employ a generator to identify the causal dynamic subgraph $\mathcal{G}^c_{1:T} = \Psi (\mathcal{G}_{1:T})$. According to the invariant learning theory \cite{rojas2018invariant}, we make the following assumption about $\Psi (\mathcal{G}_{1:T})$,

\begin{assumption} \label{ass_1} \textit{
Given a dynamic graph, there exists a causal dynamic subgraph generator $\Psi (\mathcal{G}_{1:t})$ such that: 
(1) ${Y}_{t} = f(\Psi (\mathcal{G}_{1:t})) + \epsilon$, i.e., ${Y}_{t} \bot (\mathcal{G}_{1:t} \setminus \Psi (\mathcal{G}_{1:t})) | \Psi (\mathcal{G}_{1:t})$, where f is the predictor function, $\bot$ denotes probabilistic independence, and $\epsilon$ is random noise;
(2) $\forall \mathbf{e}_i, \mathbf{e}_j \in \mathbf{E}$, $p({Y}_{t}|\Psi (\mathcal{G}_{1:t}), \mathbf{e}_i) = p({Y}_{t}|\Psi (\mathcal{G}_{1:t}), \mathbf{e}_j)$, where $\mathbf{e}_i, \mathbf{e}_j$ denote instances randomly sampled from environment $\mathbf{E}$.}
\end{assumption}

Assumption \ref{ass_1} shows that we can develop a subgraph generator to generate causal dynamic subgraphs with sufficient and stable predictive capability for dynamic node-level prediction tasks across different environments. To characterize the optimality of causal dynamic subgraph generator, we derive the following result from Assumption~\ref{ass_1}, which formalizes the mutual information maximization property of the optimal generator.

\begin{theorem}\label{the_1}
Let $\Psi$ be a subgraph generator mapping from $\mathcal{G}_{1:t}$ to a subgraph. Under Assumption~\ref{ass_1}, the following results hold:

\begin{itemize}
    \item The optimal causal subgraph generator $\Psi^*$ satisfies:
    \begin{equation} \label{th_eq_1}
        \Psi^* = \arg\max_{\Psi} I(\Psi(\mathcal{G}_{1:t}); Y_t).
    \end{equation}
    However, directly optimizing Eq. \ref{th_eq_1} is challenging, so we address this issue by approximating it with a variational lower bound.
    \item For any generator $\Psi$, the mutual information can be lower bounded by a variational approximation:
\begin{align}
   \nonumber I(\Psi(\mathcal{G}_{1:t}); &Y_t) \\ 
    &\geq 
    \mathbb{E}_{p(\mathcal{G}_{1:t}, Y_t)} 
    \left[ 
        \log q_\phi(Y_t | \Psi(\mathcal{G}_{1:t})) 
    \right] 
    + H(Y_t)
\end{align}

    where $q_\phi$ is any variational distribution approximating $p(Y_t|\Psi(\mathcal{G}_{1:t}))$, parameterized by $\phi$, and $H(Y_t)$ is the entropy of label $Y_t$.
\end{itemize}
\end{theorem}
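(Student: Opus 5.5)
The plan has two parts, one for each bullet of Theorem~\ref{the_1}.

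\textbf{First bullet: optimality.} For any generator $\Psi$, I would write $\mathcal{G}_{1:t}$ as the pair $(\Psi(\mathcal{G}_{1:t}), \mathcal{G}_{1:t}\setminus\Psi(\mathcal{G}_{1:t}))$. By the data processing inequality, $I(\Psi(\mathcal{G}_{1:t});Y_t)\le I(\mathcal{G}_{1:t};Y_t)$ for every $\Psi$, because $\Psi(\mathcal{G}_{1:t})$ is a deterministic function of $\mathcal{G}_{1:t}$.

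For the causal generator $\Psi^*$ of Assumption~\ref{ass_1}, the chain rule gives
\begin{equation*}
I(\mathcal{G}_{1:t};Y_t)=I(\Psi^*(\mathcal{G}_{1:t});Y_t)+I(\mathcal{G}_{1:t}\setminus\Psi^*(\mathcal{G}_{1:t});Y_t\mid \Psi^*(\mathcal{G}_{1:t})).
\end{equation*}
Part (1) of the assumption states the conditional independence $Y_t \bot (\mathcal{G}_{1:t}\setminus\Psi^*(\mathcal{G}_{1:t}))\mid\Psi^*(\mathcal{G}_{1:t})$, so the second term is zero. Hence $\Psi^*$ attains the upper bound $I(\mathcal{G}_{1:t};Y_t)$ and is a maximizer of Eq.~\ref{th_eq_1}.

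Part (2) of the assumption is used to argue that this maximizer is the environment-invariant one. Because $p(Y_t\mid\Psi^*(\mathcal{G}_{1:t}),\mathbf{e})$ does not depend on $\mathbf{e}$, the same maximal value is attained in every environment. A generator that also keeps spurious parts of $\mathcal{G}^e$ therefore cannot gain information over $\Psi^*$: its extra term is zero, again by the conditional independence.

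\textbf{Main obstacle.} Uniqueness of the argmax is the hardest step. Any superset of the causal subgraph also attains the maximum, so I would not prove that $\Psi^*$ is the unique maximizer. I would state the result as ``$\Psi^*$ attains the maximum''. If minimality is wanted, it would have to come from an additional tie-breaking condition, such as preferring the smallest sufficient subgraph, in the spirit of invariant-learning arguments.

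\textbf{Second bullet: variational lower bound.} This is the standard Barber--Agakov argument. Write $I(\Psi(\mathcal{G}_{1:t});Y_t)=H(Y_t)-H(Y_t\mid\Psi(\mathcal{G}_{1:t}))$, and express the conditional entropy as $-\mathbb{E}_{p}[\log p(Y_t\mid\Psi(\mathcal{G}_{1:t}))]$. Then insert an arbitrary $q_\phi$:
\begin{equation*}
\mathbb{E}_p[\log p(Y_t\mid\Psi)]=\mathbb{E}_p[\log q_\phi(Y_t\mid\Psi)]+\mathbb{E}_{p(\Psi)}\big[\mathrm{KL}(p(Y_t\mid\Psi)\,\|\,q_\phi(Y_t\mid\Psi))\big].
\end{equation*}
The KL term is nonnegative, so dropping it gives $I\ge\mathbb{E}_{p(\mathcal{G}_{1:t},Y_t)}[\log q_\phi(Y_t\mid\Psi(\mathcal{G}_{1:t}))]+H(Y_t)$. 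The expectation over $p(\Psi(\mathcal{G}_{1:t}),Y_t)$ can be rewritten over $p(\mathcal{G}_{1:t},Y_t)$ by the law of the unconscious statistician, since $\Psi(\mathcal{G}_{1:t})$ is a function of $\mathcal{G}_{1:t}$.

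Finally, I would note that the bound is tight when $q_\phi=p(Y_t\mid\Psi(\mathcal{G}_{1:t}))$. Since $H(Y_t)$ does not depend on $\Psi$, maximizing the log-likelihood jointly over $(\Psi,\phi)$ is a surrogate for Eq.~\ref{th_eq_1}.
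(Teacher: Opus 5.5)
Your argument is correct. The second bullet is the same Barber--Agakov computation the paper uses, and your writing of the dropped term as an expected KL over $p(\Psi(\mathcal{G}_{1:t}))$ is, if anything, more careful. For the first bullet you take a different and more direct route.

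The paper fixes a maximizer $\hat\Psi$ and invokes the functional representation lemma to write $\hat\Psi(\mathcal{G}_{1:t})=\gamma(\Psi^*(\mathcal{G}_{1:t}),\tilde\Psi(\mathcal{G}_{1:t}))$ with $\tilde\Psi\bot\Psi^*$. It then applies data processing through $\gamma$ and drops $\tilde\Psi$ from $I(\Psi^*,\tilde\Psi;f(\Psi^*)+\epsilon)$ on the grounds that $\tilde\Psi\bot\Psi^*$. That step actually requires $Y_t\bot\tilde\Psi\mid\Psi^*$. The lemma's marginal independence does not supply this; it has to be traced back to Assumption~\ref{ass_1}(1) together with a suitable construction of $\tilde\Psi$.

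Your route needs no auxiliary variable. The universal ceiling $I(\Psi(\mathcal{G}_{1:t});Y_t)\le I(\mathcal{G}_{1:t};Y_t)$, plus the chain rule and the conditional independence in Assumption~\ref{ass_1}(1), shows that $\Psi^*$ attains the ceiling. So the assumption enters exactly where it is needed, and you also identify the optimal value as $I(\mathcal{G}_{1:t};Y_t)$. This implicitly uses that $\mathcal{G}_{1:t}$ is recoverable from a subgraph and its edge-wise complement, which holds here.

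Your uniqueness caveat applies equally to the paper. Its claim that ``it suffices to show $I(\Psi^*;Y_t)\ge I(\hat\Psi;Y_t)$ to demonstrate $\hat\Psi=\Psi^*$'' only shows that $\Psi^*$ is \emph{a} maximizer; the identity generator attains the same value. So ``$\Psi^*\in\arg\max$'' is the real content of both proofs.

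One minor point: your paragraph on Part~(2) is unnecessary. Its claim that the same maximal value is attained in every environment also does not follow. Invariance of $p(Y_t\mid\Psi^*(\mathcal{G}_{1:t}),\mathbf{e})$ says nothing about how the distribution of $\Psi^*(\mathcal{G}_{1:t})$ varies with $\mathbf{e}$, so you can simply drop that paragraph.
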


\begin{figure*}[htbp]
\centering
\includegraphics[width=0.95\linewidth]{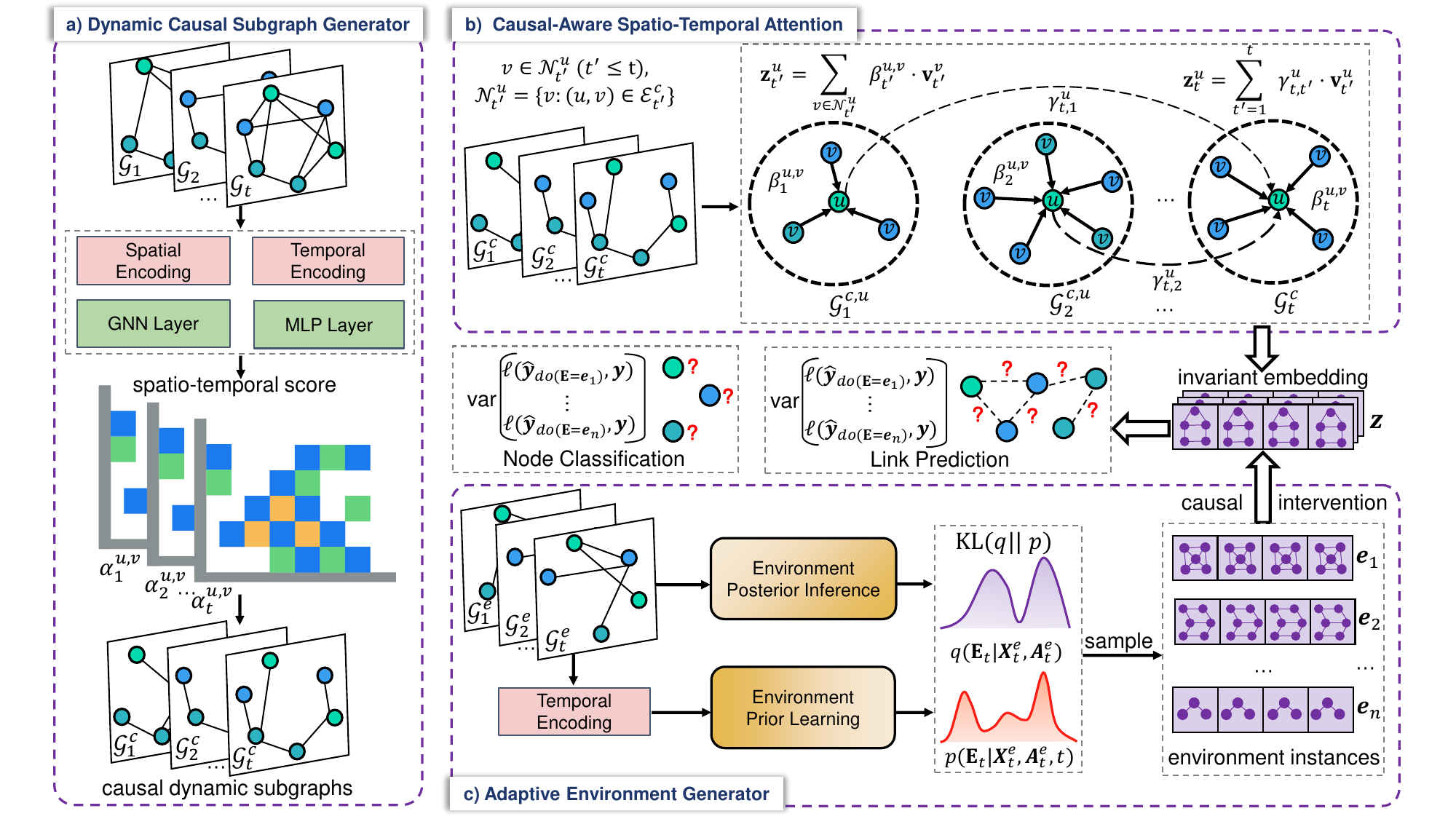} 
\caption{The framework of DyCIL.}
\label{fig_model}    
\end{figure*}

We provide the detailed proof in Appendix \ref{app_theorem}. In our work, the bound is indirectly optimized from a causal perspective through the invariance loss and the intervention loss. Therefore, $\Psi$ can be instantiated with learnable parameters to generate the causal dynamic subgraph. In static graphs, existing methods commonly utilize GNNs to generate soft masks for identifying causal subgraphs \cite{wu2022dir,li2022learning}. However, such methods assume the causal structure to be fixed, which is incompatible with dynamic graphs where both the topology and underlying causal subgraphs evolve over time. To address this, we propose to quantify the spatio-temporal importance of dynamic subgraphs by jointly encoding spatial structure and temporal dynamics, thereby enabling the generation of causal dynamic subgraphs with temporal evolution.


\textbf{Spatial and Temporal Encoding}. We first utilize the degree centrality commonly adopted in literature \cite{marwick2011see,ying2021transformers} to capture the spatial feature of nodes within the graph. Then we employ the temporal encoding technique \cite{xu2020inductive} to incorporate temporal dependency information. For node $u$ in snapshot $\mathcal{G}_t$, we have,
\begin{equation}
    \mathbf{H}_{t}^{u} = \mathbf{W}((\mathbf{X}_{t}^{u} + deg(D_{t}^{u})) \oplus TE(t)),
\end{equation}
where $\mathbf{W}$ is the trainable parameters, $deg(\cdot)$ is learnable embedding vectors specified by degree $D_{t}^{u}$ of node $u$ in timestamp $t$ as spatial encoding function, and $\oplus$ is the element-wise addition. $TE(t)$ denotes the temporal encoding function, specifically, 
\begin{equation}
TE(t) = \sqrt{\frac{1}{d}}\left[\cos \left(w_{1}t\right), \sin \left(w_{1}t\right), \cdots, \sin \left(w_{d}t\right)\right],
\end{equation}
where $w_1, \cdots, w_d$ are learnable parameters, $d$ is the encoding dimension. In this way, the spatial feature and temporal dependency information can be inherently incorporated into $\mathbf{H}_{t}^{u}$.

\textbf{Generate Causal Dynamic Subgraph}. Subsequently, we can obtain the spatio-temporal score of each edge in $\mathcal{G}_t$ with a GNN-based generator,
\begin{equation} \label{eq_edge_score}
    {\alpha}_{t}^{u,v} = {\rm MLP}(\mathbf{h}_{t}^{u}||\mathbf{h}_{t}^{v}), (u, v) \in \mathcal{E}_t, \hspace{1em}
    \mathbf{h}_{t} = {\rm GNN}(\mathcal{G}_t, \mathbf{H}_t),
\end{equation} 
where $|| $ denotes concatenation operation, $\mathbf{h}_{t} \in \mathbb{R}^{\mathcal{|V}_t| \times |d|} $ summarizes the \textit{d}-dimensional spatio-temporal matrix of all nodes in $\mathcal{G}_t$, $\mathbf{h}_t^{u}$ and $\mathbf{h}_t^{v}$ denotes the vector of node \textit{u} and \textit{v} in $\mathbf{h}_{t}$, and ${\alpha}_{t}^{u,v}$ denotes the spatio-temporal score of edge between node $u$ and node $v$ in $\mathcal{G}_t$. Then we will select the edges with the highest spatio-temporal score to construct a causal dynamic subgraph $\mathcal{G}_t^c$ and collect the complement of $\mathcal{G}_t^c$ as variant dynamic subgraph $\mathcal{G}_t^e$, particularly,
\begin{equation} \label{eq_subgraph}
    \mathcal{E}_{t}^{c} = {\rm Top}_{r}({\alpha}_{t}), \hspace{1em}
    \mathcal{E}_{t}^{e} = {\rm Top}_{1-r}(1- {\alpha}_{t}),
\end{equation} 
where $\alpha_t$ denotes the spatio-temporal scores of all edges in $\mathcal{G}_t$, and ${\rm Top}_r(\cdot)$ selects the ${\rm top}-K$ edges with $K = |\mathcal{E}_t| \times r$, and $r$ is the causal ratio. 


\subsection{Causal-Aware Spatio-Temporal Attention} \label{sec_spatt}
In dynamic graphs, the state of node $u$ at timestamp $t$ is collaboratively determined from its current neighbors and past neighbors, \textit{i.e.}, dynamic ego-graph. Therefore, inspired by some literature \cite{sankar2020dysat,zhang2022dynamic}, we propose causal-aware spatio-temporal attention to learn the invariant spatio-temporal pattern of each node with its causal dynamic ego-graph, further to project the intrinsic evolution rationale behind the causal dynamic subgraphs into node embedding. 

Specifically, we first construct the causal dynamic ego-graph of node $u$ at timestamp $t$ based on the identified causal dynamic subgraphs, denoted as $\mathcal{G}_{1:t}^{c,u} = \left\{\mathcal{G}_{1}^{c,u}, \mathcal{G}_2^{c,u},\cdots, \mathcal{G}_t^{c,u}\right\}$. For node $u$ at timestamp $t' (t' \leq t)$, we utilize spatial self-attention integrated with spatial-temporal scores $\alpha_{t'}^{u,v}$ to aggregate information from the neighbors of $\mathcal{G}_{t'}^{c,u}$. On the one hand, $\alpha_{t'}^{u,v}$ helps learn causal information with diversity, thereby aiding the model in effectively extracting invariant patterns. On the other hand, involving the dynamic causal subgraph generator in the optimization process allows for end-to-end training. This process is formalized as follows,

\begin{equation}
    \mathbf{z}_{t'}^{u} = \sum_{v \in \mathcal{N}_{t'}^{u}} \beta_{t'}^{u,v} \mathbf{v}_{t'}^{v},
     \beta_{t'}^{u,v} = \frac{exp \left(\frac{\mathbf{q}_{t'}^{u} (\mathbf{k}_{t'}^{v})^{T}}{\sqrt{d'}} \alpha_{t'}^{u,v} \right) }{\sum_{j\in \mathcal{N}_{t'}^{u}} exp\left(\frac{\mathbf{q}_{t'}^{u} (\mathbf{k}_{t'}^{j})^{T}}{\sqrt{d'}} \alpha_{t'}^{u,j} \right) }
\end{equation}
\begin{equation}
    \mathbf{q}_{t'}^{u} = \mathbf{W}^{q} \mathbf{X}_{t'}^{u}, \hspace{1em} \mathbf{k}_{t'}^{v} = \mathbf{W}^{k} \mathbf{X}_{t'}^{v}, \hspace{1em} \mathbf{v}_{t'}^{v} = \mathbf{W}^{v} \mathbf{X}_{t'}^{v},
\end{equation}
where $\mathbf{z}_{t'}^{u}$ summarizes the causal patterns at timestamp $t'$ ,$\mathcal{N}_{t'}^{u} = \left\{ v : (u,v) \in \mathcal{E}_{t'}^{c}\right\}$ is the \textit{L}-hop causal neighbors of node \textit{u} at timestamp \textit{t'}, and $d'$ denotes the feature dimension. $\mathbf{q}_{t'}^{u}$, $\mathbf{k}_{t'}^{v}$, $\mathbf{v}_{t'}^{v}$ represents the query vector of node \textit{u}, key and value vector of node \textit{v} in timestamp \textit{t'}, respectively. 

After getting causal patterns $\mathbf{z}_{1:t'}^{u}$ of node $u$ across various snapshots, we employ a temporal self-attention mechanism to aggregate information from its causal dynamic ego-graph $\mathcal{G}_{1:t}^{c,u}$ to obtain the final invariant spatio-temporal node embedding $\mathbf{z}_t^u$ of node $u$ at timestamp $t$. Specially, 
\begin{equation}
     \mathbf{z}_{t}^{u} = \sum_{t' = 1}^{t} \gamma_{t,t'}^{u} \mathbf{v}_{t'}^{u}, \hspace{1em}
     \gamma_{t,t'}^{u} = \frac{exp \left(\frac{\mathbf{q}_{t}^{u} (\mathbf{k}_{t'}^{u})^{T}}{\sqrt{d'}} \right)}{\sum_{i = 1}^{t} exp\left(\frac{\mathbf{q}_{t}^{u} (\mathbf{k}_{i}^{u})^{T}}{\sqrt{d'}}\right)}
\end{equation}

\begin{equation}
\begin{aligned}
    \mathbf{q}_t^{u} = \mathbf{W}^{q} & \left(\mathbf{z}_{t}^{u} || TE(t)\right) , \mathbf{k}_{t'}^{u} = \mathbf{W}^{k} \left(\mathbf{z}_{t'}^{u} || TE(t')\right) , \\ 
    &\hspace{2em}
    \mathbf{v}_{t'}^{u} = \mathbf{W}^{v} \left(\mathbf{z}_{t'}^{u} || TE(t')\right)
\end{aligned}
\end{equation}
where $\mathbf{q}_{t}^{u}$, $\mathbf{k}_{t'}^{u}$, $\mathbf{v}_{t'}^{u}$ represents the query, key and value vector of node \textit{u} in timestamp \textit{t} and \textit{t'}, respectively. $TE(t)$ is utilized to account for the inherent temporal dependencies across different snapshots.

\subsection{Adaptive Environment Generator}
Inferring environment instances directly from observed data suffers from two key drawbacks: it yields only a limited number of discrete environments and fails to reflect the continuous nature of dynamic OOD shifts. For example, in citation networks, authors gradually transition between research fields. To overcome this, we infer the environment distribution and sample a large number of semantically similar yet distinct instances. These continuous environment variations help the model capture the underlying dynamics of distributional shifts, thereby improving its generalization ability across complex OOD scenarios. Specifically,
\begin{equation} 
q(\mathbf{E}_{t}|\mathbf{X}_{t}^{e}, \mathbf{A}_t^{e}) = \prod_{i=1}^{n} q(\mathbf{E}_{t}^{i}| \mathbf{X}_t^e, \mathbf{A}_t^e),
\label{eq12}
\end{equation} 
\begin{equation} 
q(\mathbf{E}_{t}^{i}| \mathbf{X}_t^e, \mathbf{A}_t^t) = \mathcal{N}(\mathbf{E}_{t}^{i}| \boldsymbol{\mu}_{t}^{i}, diag((\boldsymbol{\sigma}^{i}_{t})^{2})),
\label{eq13}
\end{equation}
where $\mathbf{X}_t^e$ and $\mathbf{A}_t^e$ denote the feature matrix and adjacency matrix of variant subgraph $\mathcal{G}_t^e$, respectively, $\mathbf{E}_t^i$ represents the \textit{i}-th vector of $\mathbf{E}_t$, $\boldsymbol{\mu}_{t}^{i}$ and $(\boldsymbol{\sigma}^{i}_{t})^{2}$ are the \textit{i}-th values of environment mean vectors $\boldsymbol{\mu}_{t}$ and environment variance vector $(\boldsymbol{\sigma}_{t})^{2}$. Then we sample sufficient environment instances $\mathbf{e}_{i}$ from the approximate posterior distribution $q(\mathbf{E}_{t}|\mathbf{X}_{t}^{e}, \mathbf{A}_t^{e})$ of latent environment to conduct causal intervention, which can endow the model with stronger generalization capabilities across various complex dynamic OOD scenarios.

Additionally, to accurately characterize the environmental distribution, we take a new prior distribution on the environment by allowing for prior parameters to be modeled by information of variant dynamic subgraphs. In particular, we can write the construction of the prior distribution adopted in our work as follows,
\begin{equation} \label{eq14}
    p(\mathbf{E}_t|\mathbf{X}_t^e, \mathbf{A}_t^e, t) = \mathcal{N}(\boldsymbol{\mu}_{t}^{p}, diag((\boldsymbol{\sigma}^{p}_{t})^{2})),   
\end{equation}
\begin{equation} \label{eq15}
(\boldsymbol{\mu}_{t}^{p},\boldsymbol{\sigma}^{p}_{t}) = {\rm MLP}({\rm GNN}(\mathbf{X}_t^e, \mathbf{A}_t^e)|| {TE(t)}),
\end{equation}
where $\boldsymbol{\mu}_{t}^{p}$ and $\boldsymbol{(\sigma}^{p}_{t})^{2})$ are the learned prior mean vector and variance vector, respectively.  Here, the temporal encoding function is employed to preserve the temporal dependency information in prior distribution. We can enforce the approximate posterior $q(\mathbf{E}_{t}|\mathbf{X}_{t}^{e}, \mathbf{A}_t^{e})$ to be close to the prior $ p(\mathbf{E}_t|\mathbf{X}_t^e, \mathbf{A}_t^e, t)$ by minimizing their KL divergence to optimize the process,
\begin{equation} \label{eq_kl_loss}
    \mathcal{L}_{E} = {\rm KL}[q(\mathbf{E}_{t}|\mathbf{X}_{t}^{e}, \mathbf{A}_t^{e})||p(\mathbf{E}_t|\mathbf{X}_t^e, \mathbf{A}_t^e, t)].
\end{equation} 

\subsection{Optimization with Causal Intervention} 
Based on the learned invariant spatio-temporal embedding $\mathbf{z}$ and generated environment instance $\mathbf{e}_i$, we further instantiate the training objective of OOD generalization with causal intervention. We first utilize two classifiers to project $\mathbf{z}$ and $\mathbf{e}_i$ into a probability distribution over labels $\mathbf{y}$, where $\mathbf{z}$ and $\mathbf{y}$ are the summarized invariant node embeddings and labels. Inspired by \cite{wu2022dir,cadene2019rubi}, then we formulate the joint prediction $\hat{\mathbf{y}}_{do(E = e_i)}$ under the intervention $do(\mathbf{E} = \mathbf{e}_i)$ as $\hat{\mathbf{y}}_{c}$ masked by $\hat{\mathbf{y}}_{e_i}$,
\begin{equation} \label{eq_inervention}
    \hat{\mathbf{y}}_{do(\mathbf{E} = \mathbf{e}_i)} = \hat{\mathbf{y}}_{c} \odot \sigma(\hat{\mathbf{y}}_{e_i}),\hspace{1em}
    \hat{\mathbf{y}}_{c} = \Phi_c(\mathbf{z}), \hspace{1em}
    \hat{\mathbf{y}}_{e_i} = \Phi_e(\mathbf{e_i}), 
\end{equation}
where $\hat{\mathbf{y}}_{c}$ is the predicted label obtained solely from the causal dynamic subgraphs, $\hat{\mathbf{y}}_{e_i}$ measures the predictive ability of the variant dynamic subgraphs form environment, and $\odot$ denotes element-wise multiplication, $\sigma$ is the sigmoid function used to help discover the causal subgraph via adjusting the output logits of $\mathbf{z}$. Then we calculate the the intervention loss and task loss as follows, 
\begin{equation} \label{eq_causal+loss}
    \mathcal{L}_{do} = {\rm Var}_{\mathbf{e}_i \in \mathbf{E}} \left\{ \ell (\hat{\mathbf{y}}_{do(\mathbf{E} = \mathbf{e}_i)}, y)\right\},\hspace{1em}
    \mathcal{L}_{inv} = \ell (\hat{\mathbf{y}}_{c}, y).
\end{equation} 

The final loss function consists of three parts: invariance loss, intervention loss, and the KL divergence of the environmental distribution,
\begin{equation} \label{eq_loss}
    \mathcal{L} = \mathcal{L}_{inv} + \lambda (\mathcal{L}_{do} + \mathcal{L}_{E}),
\end{equation}
where $\lambda$ is the trade-off weight. We provide the overall training algorithm, complexity analysis, and time cost of DyCIL in Appendix \ref{app_alp_noration}, \ref{app_complex}, and \ref{app_experiments}.

\begin{table*}[ht]
\centering
\resizebox{0.9\textwidth}{!}{
\begin{tabular}{ccccccccc}
\toprule
Dataset          & \# Nodes & \# Edges & \# Snapshots & \# Features & \# Classes & Train/Val/Test & \begin{tabular}[c]{@{}c@{}}Temporal\\ Granularity\end{tabular} & Distribution Shift        \\ \midrule
Collab           & 23,035   & 151,790  & 16           & 32          & -          & 10/1/5         & year                 & Cross-Domain Transfers    \\ 
ACT              & 20,408   & 202,339  & 30           & 32          & -          & 20/2/8         & day                  & Cross-Domain Transfers    \\ 
Synthetic-Collab & 23,035   & 151,790  & 16           & 64          & -          & 10/1/5         & -                    & Feature Evolution         \\ 
Temporal-Motif   & 5,000     & 298,709  & 30           & 8           & 3          & 24/3/3         & -                    & Motif Structure Transfers \\
Aminer           & 43,141   & 851,527  & 17           & 128         & 20         & 11/3/3         & year                 & Temporal Evolution        \\ \bottomrule
\end{tabular}}
\caption{The summary of the dataset statistics.}
\label{tab_datasets}
\end{table*}

\begin{table*}[t]
\centering
\resizebox{0.9\linewidth}{!}{
\begin{tabular}{c|cccccccc}
\toprule
\multicolumn{1}{l|}{}                           & \multicolumn{1}{c|}{Dataset}                       & \multicolumn{2}{c}{Collab}                                     & \multicolumn{2}{c}{ACT}                                        & \multicolumn{3}{c}{Synthetic-Collab}                            \\ \midrule
                          \multicolumn{1}{c|}{Method Type} & \multicolumn{1}{c|}{Model}   & w/o OOD           & \multicolumn{1}{c|}{w/ OOD}            & w/o OOD           & \multicolumn{1}{c|}{w/ OOD}            & $\bar{p} = 0.4$                 & $\bar{p} = 0.6$                  & $\bar{p} = 0.8$                \\ \midrule
                                                \multirow{3}{*}{DyGNNs} & \multicolumn{1}{c|}{GCRN}     & 82.78±0.54          & \multicolumn{1}{c|}{69.72±0.45}          & 76.28±0.51          & \multicolumn{1}{c|}{64.35±1.24}          & 72.57±0.72          & 72.29±0.47          & 67.26±0.22          \\
                                                & \multicolumn{1}{c|}{EvolveGCN}     & 86.62±0.95          & \multicolumn{1}{c|}{76.15±0.91}          & 74.55±0.33          & \multicolumn{1}{c|}{63.17±1.05}          & 69.00±0.53          & 62.70±1.14          & 60.13±0.89          \\
                                                & \multicolumn{1}{c|}{DySAT}    & 88.77±0.23          & \multicolumn{1}{c|}{76.59±0.20}          & 78.52±0.40          & \multicolumn{1}{c|}{66.55±1.21}          & 70.24±1.26          & 64.01±0.19          & 62.19±0.39          \\ \midrule
\multirow{3}{*}{OOD}                            & \multicolumn{1}{c|}{IRM}      & 87.96±0.90          & \multicolumn{1}{c|}{75.42±0.87}          & 80.02±0.57          & \multicolumn{1}{c|}{69.19±1.35}          & 69.40±0.09          & 63.97±0.37          & 62.66±0.33          \\
                                                & \multicolumn{1}{c|}{VREx}     & 88.31±0.32          & \multicolumn{1}{c|}{76.24±0.77}          & 83.11±0.29          & \multicolumn{1}{c|}{70.15±1.09}          & 70.44±1.08          & 63.99±0.21          & 62.21±0.40          \\
                                                & \multicolumn{1}{c|}{GroupDRO} & 88.76±0.12          & \multicolumn{1}{c|}{76.33±0.29}          & 85.19±0.53          & \multicolumn{1}{c|}{74.35±1.62}          & 70.30±1.23          & 64.05±0.21          & 62.13±0.35          \\ \midrule
\multicolumn{1}{c|}{\multirow{2}{*}{Graph OOD}} & \multicolumn{1}{c|}{EERM}     & OOM                 & \multicolumn{1}{c|}{OOM}                 & OOM                 & \multicolumn{1}{c|}{OOM}                 & OOM                 & OOM                 & OOM                 \\
\multicolumn{1}{l|}{}                           & \multicolumn{1}{c|}{DIR}      & 88.18±0.12          & \multicolumn{1}{c|}{76.07±0.30}          & 88.24±0.46          & \multicolumn{1}{c|}{75.71±2.15}          & 76.81±1.11          & 67.97±0.35          & 66.53±0.60           \\  \midrule
\multirow{5}{*}{\begin{tabular}[c]{@{}c@{}}Dynamic Graph\\ OOD\end{tabular}}                      & \multicolumn{1}{c|}{DIDA}     & 91.97±0.05          & \multicolumn{1}{c|}{81.87±0.40}          & 89.84±0.82          & \multicolumn{1}{c|}{78.64±0.97}          & 85.20±0.84          & 82.89±0.23          & 72.59±3.31          \\
                                                & \multicolumn{1}{c|}{SILD}     & \underline{92.36±0.19}          & \multicolumn{1}{c|}{84.14±0.31}          & 89.28±0.41          & \multicolumn{1}{c|}{79.91±0.65}          & 85.95±0.18          & 84.69±1.18          & 78.01±0.71          \\
                                                & \multicolumn{1}{c|}{EAGLE}    & 92.34±0.13          & \multicolumn{1}{c|}{84.24±0.19}          & \underline{92.43±0.71}          & \multicolumn{1}{c|}{82.99±0.83}          & \underline{88.12±0.47}          & \underline{86.97±0.50}          & \underline{82.08±0.96}
                                                \\
                                                & \multicolumn{1}{c|}{OOD-Linker}    & ---          & \multicolumn{1}{c|}{\underline{85.30±0.31}}          & ---          & \multicolumn{1}{c|}{\textbf{85.98±1.00}}          & 85.58±1.54          & 83.09±1.82          & 79.83±1.69
                                                \\ \cmidrule{2-9}
                                                & \multicolumn{1}{c|}{DyCIL}    & \textbf{95.00±0.09} & \multicolumn{1}{c|}{\textbf{86.54±0.17}} & \textbf{93.41±0.13} & \multicolumn{1}{c|}{\underline{85.66±0.21}} & \textbf{90.98±0.09} & \textbf{87.69±0.23} & \textbf{83.31±0.19} \\ \bottomrule
    \end{tabular}}
\caption{AUC score (\%) of link prediction. The best are bolded and the second best are underlined.}
\label{tab_lp_results}
\end{table*}

\begin{table*}[t]
\centering
\resizebox{0.8\textwidth}{!}{
\begin{tabular}{c|c|ccc|ccc}
\toprule
\multicolumn{1}{l|}{}      & Dataset  & \multicolumn{3}{c}{Temporal-Motif}                             & \multicolumn{3}{c}{Aminer}                                      \\ \midrule
Method Type                & Model   & T-M28               & T-M29               & T-M30               & Aminer15            & Aminer16            & Aminer17            \\ \midrule
\multirow{3}{*}{DyGNNs}     & GCRN     & 51.93±1.68          & 51.55±1.26          & 51.31±1.28          & 47.96±1.12          & 51.33±0.62          & 42.93±0.71          \\
                           & EvolveGCN     & 45.63±0.33          & 45.50±0.28          & 45.29±0.42          & 44.14±1.12          & 46.28±1.84          & 37.71±1.84          \\
                           & DySAT    & 48.44±0.56          & 48.20±0.45          & 47.99±0.45          & 48.41±0.81          & 49.76±0.96          & 42.39±0.62          \\ \midrule
\multirow{3}{*}{OOD}       & IRM      & 50.17±0.71          & 50.01±0.52          & 49.84±0.67          & 48.44±0.13          & 50.18±0.73          & 42.40±0.27          \\
                           & VREx     & 51.32±0.77          & 52.64±0.63          & 50.79±0.46          & 48.70±0.73          & 49.24±0.27          & 42.59±0.37          \\
                           & GroupDRO & 50.89±0.38          & 50.95±0.87          & 50.18±0.53          & 48.73±0.61          & 49.74±0.26          & 42.80±0.36          \\ \midrule
\multirow{2}{*}{Graph OOD} & EERM     & 45.23±0.47          & 45.05±0.40          & 44.68±0.41          & OOM                 & OOM                 & OOM                 \\
                           & DIR      & 47.07±0.45          & 47.42±0.68          & 45.74±0.56          & 46.85±1.84          & 49.69±1.71          & 41.73±1.41          \\  \midrule
\multirow{4}{*}{\begin{tabular}[c]{@{}c@{}}Dynamic Graph\\ OOD\end{tabular}} & DIDA     & 64.26±0.31          & 63.71±0.41          & 61.94±0.33          & 50.34±0.81          & 51.43±0.27          & 44.69±0.06          \\
                           & SILD     & \underline{77.89±0.74}          & \underline{72.70±0.84}          & \underline{66.72±0.93}          & \underline{51.61±1.12}          & \underline{52.85±1.05}          & \underline{44.47±1.18}          \\
                           & EAGLE    & 67.26±1.17          & 66.98±1.06          & 65.46±1.05          & OOM                 & OOM                 & OOM                 \\ \cmidrule{2-8} 
                           & DyCIL    & \textbf{82.66±0.34} & \textbf{81.13±0.20} & \textbf{78.95±0.19} & \textbf{53.04±0.75} & \textbf{54.30±0.21} & \textbf{45.89±0.45} \\ \bottomrule
\end{tabular}}
\caption{ACC score (\%) of node classification. The best are bolded and the second best are underlined.}
\label{tab_nc_results}
\end{table*}

\section{Experiments} 
\subsection{Datasets and Baselines}

We utilize three real-world datasets, namely Collab \cite{tang2012cross}, ACT \cite{kumar2019predictingact}, and Aminer \cite{tang2008arnetminer}. Additionally, we synthesize two datasets by introducing manually designed distribution shifts, Temporal-Motif and Synthetic-Collab. Table \ref{tab_datasets} summarizes the statistics of all datasets. 
We compare our method with representative approaches from DyGNNs (GCRN \cite{seo2018structured}, EvolveGCN \cite{pareja2020evolvegcn}, DySAT \cite{sankar2020dysat}), OOD generalization (IRM \cite{arjovsky2019invariant}, V-REx \cite{krueger2021VREx}, GraphDRO \cite{sagawa2019distributionally}), graph OOD generalization (DIR \cite{wu2022dir}, EERM \cite{wu2022handling}, and dynamic graph OOD generalization (DIDA \cite{zhang2022dynamic}, SILD \cite{zhang2023spectral}, EAGLE \cite{yuan2023eagle}, OOD-Linker \cite{tieu2025oodlinker} ). 
More details about the datasets, baselines, reproducibility, and experiment results are in Appendix \ref{app_datasets}, \ref{app_details}, and \ref{app_experiments}.

\subsection{Results on Link Prediction Task}\label{sec_LP}
We show the results in Table \ref{tab_lp_results}. DyGNNs methods deteriorate significantly under distribution shifts, while DyCIL achieves more progress in OOD scenarios.  Specifically, compared to DySAT, DyCIL achieves improvements of 7.01\% and 12.9\%, and 18.9\% and 28.7\% for the w/o OOD and w/ OOD scenarios on the Collab and ACT datasets, respectively. The general OOD generalization methods only have limited improvements, as the environment labels they rely on are not available in real-world dynamic graphs. 
Graph OOD methods focus only on the causal rationale of the structure without considering the temporal dynamics of dynamic graphs, hindering their ability to handle OOD shifts in dynamic graphs. DIDA and SILD rely solely on observable data to perform causal interventions, which weakens their generalization ability in scenarios with severe distribution shifts. Specifically, in the Synthetic-Collab dataset with $\bar{p}=0.8$, the performance of DIDA and SILD decrease by 10.3\% and 6.68\% compared to $\bar{p}=0.6$, while our model only drops by 4.41\%. This demonstrates that DyCIL has better generalization capabilities in dynamic graphs with severe OOD shifts by capturing the underlying dynamics of distributional shifts adaptively. EAGLE achieves further improvements by focusing on latent environment modeling of dynamic graphs. However, it requires additional labels to infer the environment and overlooks the intrinsic evolution rationale of invariant patterns that are commonly present in real dynamic graphs, which hinders its scalability and performance. Therefore, DyCIL shows a more significant improvement in the Collab and ACT real-world datasets compared to EAGLE. Due to its neglect of environmental influences, OOD-Linker performs significantly worse than EAGLE and DyCIL on three synthetic datasets characterized by severe OOD distribution shifts.

\subsection{Results on Node Classification Task}
The results are shown in Table \ref{tab_nc_results}. DyCIL outperforms all baseline methods on both node classification datasets by a significantly large margin. Specifically, DyCIL improves by 4.77\%, 8.43\%, and 12.23\% over the best baseline, SILD, on the Temporal-Motif dataset.  As time evolves, the improvement of DyCIL becomes more significant, mainly because the emergence of more unseen variant motif structures leads to more severe distribution shifts as time goes on. SILD struggles to handle cases with severe OOD shifts, resulting in a noticeable performance decline over time. In contrast, DyCIL can continuously capture causal motif structures under different degrees of distribution shifts, thus significantly outperforming various baselines. On the Aminer dataset, DyCIL handles OOD shifts better than all the baselines, which validates that DyCIL can capture invariant spatio-temporal patterns under distribution shifts. In addition, DyCIL exhibits smaller variance in most cases, showing it's less sensitive to spurious correlations under different distribution shifts.

\begin{figure}[ht]
\centering
\subfloat{
\centering
\includegraphics[width=0.43\linewidth]{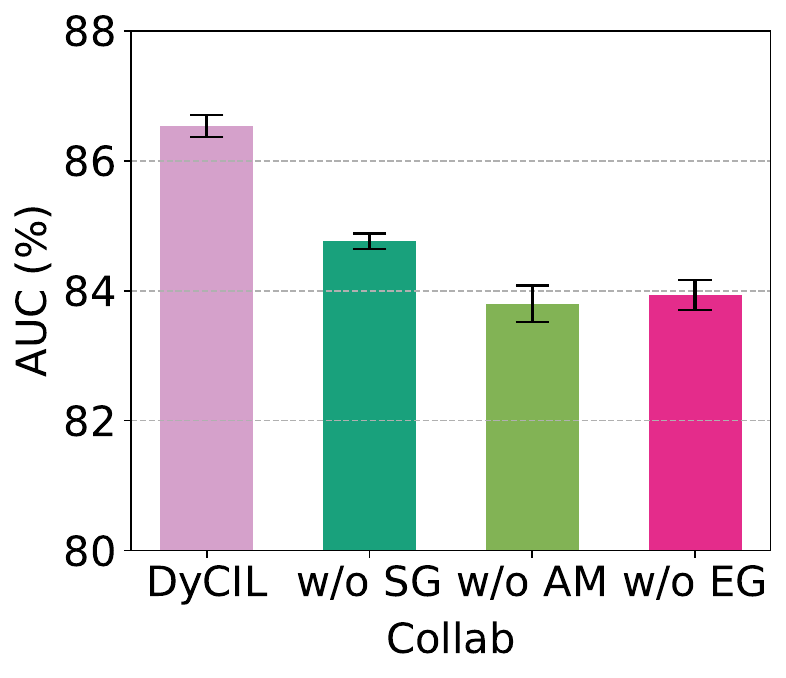}
}%
\subfloat{
\centering
\includegraphics[width=0.43\linewidth]{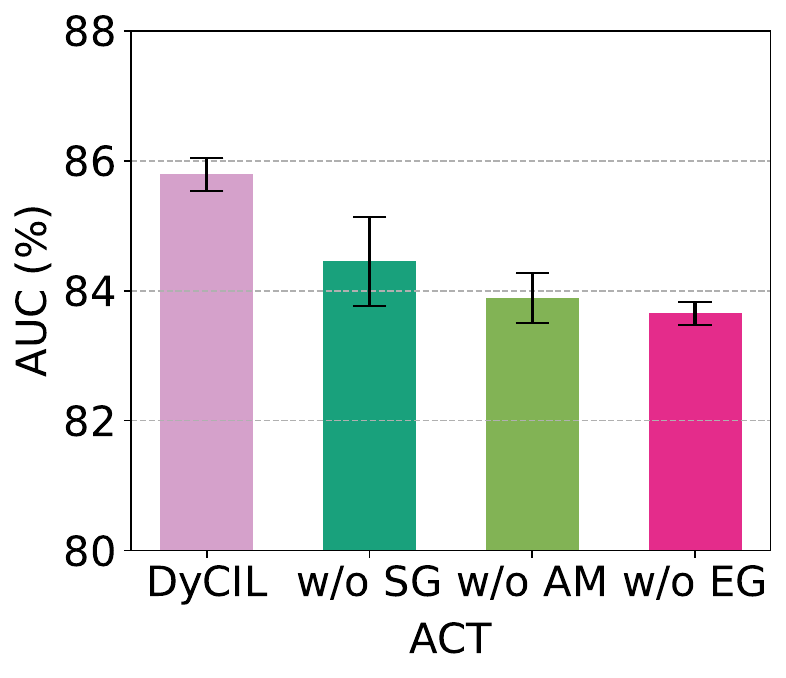}
}%
\caption{The ablation experiment results.}
\label{figure_ablation}
\end{figure}
\subsection{Ablation Study}
To further validate the contribution of each component in our model, we conduct ablation experiments. We name the DyCIL variants as follows, \textbf{w/o SG}: DyCIL without the dynamic causal subgraph generator. \textbf{w/o AM}: DyCIL without the causal-aware spatio-temporal attention module. \textbf{w/o EG}: DyCIL without adaptive environment generator. We present the ablation experiment results in Figure \ref{figure_ablation}. Overall, DyCIL outperforms all its variants across various datasets. Removing any of the three components leads to a significant performance drop, further validating the effectiveness of each component of our model. In summary, DyCIL jointly optimizes three mutually promoting modules to effectively capture invariant spatio-temporal patterns and achieve OOD generalization under distribution shifts in dynamic graphs. More ablation analyses are provided in Appendix \ref{app_experiments}.

\begin{figure}[t]
\centering
\includegraphics[width=0.65\linewidth]{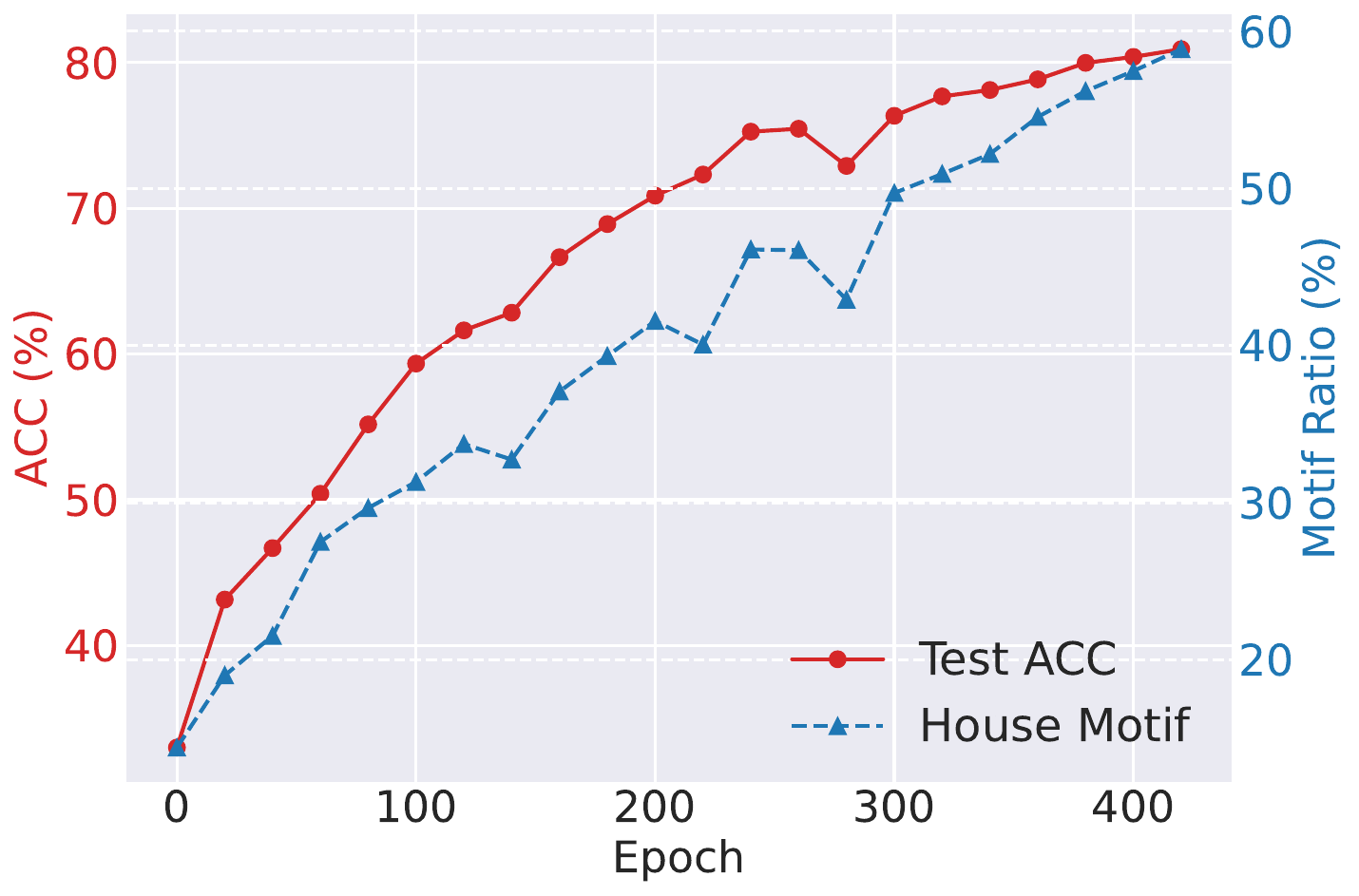} %
\caption{A case study on Temporal-Motif dataset. }
\label{fig_case}    
\end{figure}

\subsection{Case Study} \label{}
To verify whether DyCIL truly learns the causal dynamic subgraphs and captures the evolutionary rationale in dynamic graphs, we conduct a case study on the Temporal-Motif dataset. Since the node labels are determined solely by the evolving 'house' motif that constitutes the causal rationale, the causal dynamic subgraphs generated by the model should contain more 'house' motifs. Therefore, we calculate the ratio of 'house' motifs in the causal dynamic subgraphs to the total number of 'house' motifs and show its relationship with performance in Figure \ref{fig_case}. We can notice that as the number of training epochs increases, the identified causal dynamic subgraphs contain more 'house' motifs, enabling successive improvements in performance. In other words, the performance improvement of DyCIL comes from capturing more 'house' motifs. It demonstrates that DyCIL captures genuine causal patterns to make stable predictions across various distribution shifts and explains why DyCIL has good OOD generalization ability. This serves as additional evidence that DyCIL enhances interpretability by identifying causal subgraphs in dynamic graph OOD generalization. 


\section{Conclusion}
In this paper, we propose DyCIL, a novel model to handle OOD shifts in dynamic graphs by exploiting spatio-temporal invariant patterns from a causal view. DyCIL first generates causal dynamic subgraphs via a dynamic causal subgraph generator. Then DyCIL learns invariant node representations by capturing the evolutionary rationale behind its causal dynamic ego-graph. Finally, DyCIL instantiates latent environments through inferred environment distribution and performs causal interventions. Experiment results demonstrate that DyCIL can better handle OOD shifts compared to state-of-the-art baselines. The ablation study further validates the contribution of each module. Additionally, we conduct a case study to demonstrate DyCIL's capability to identify causal dynamic subgraphs. One limitation is that we mainly consider discrete dynamic homogeneous graphs. Future work can explore OOD generalization in dynamic heterogeneous graphs and continuous dynamic graphs, which are more common and challenging in the real world.


\section{Acknowledgments}
This work was supported in part by the Zhejiang Provincial Natural Science Foundation of China under Grant LDT23F01015F01 and LMS25F030011, in part by the Key Technology Research and Development Program of the Zhejiang Province under Grant No. 2025C1023, in part by the National Natural Science Foundation of China under Grants 62372146, and in part by the Open Research Fund of The State Key Laboratory of AI Safety, Chinese Academy of Sciences (2025-02).

\bibliography{aaai2026}

\appendix
\setcounter{secnumdepth}{2}
\section{Notations and Algorithm}  \label{app_alp_noration}
We summarize the symbols used in this paper in Table \ref{notation} and provide the overall training algorithm for DyCIL in Algorithm \ref{alg_dycil}. In real-world dynamic graphs, distribution shifts are common due to the confounding environmental biases in their generation process. For example, in a dynamic social network, individuals may change jobs or living places, and these changes in their living area act as confounding biases from the external environment, implicitly affecting their attributes and behaviors.

\begin{table}[ht]
\caption{Major notations in this paper}
\label{notation}
\renewcommand{\arraystretch}{1}
\centering
  \begin{tabular}{c|l}
    \toprule
    Symbol & Description \\
    \midrule
    $\mathcal{G}_{1:T}$ & A dynamic graph with $T$ snapshots \\
    $\mathcal{G}_t$ & The snapshot at timestamp $t$ \\
    $\mathcal{V}_t, \mathcal{E}_t$ & The nodes and edges of the snapshot $\mathcal{G}_t$ \\
    $Y_t$, $Y_t^{u}$ & Labels of $\mathcal{G}_t$ and the label of node $u$ in $\mathcal{G}_t$ \\
    $\mathbf{X}_t, \mathbf{A}_t$ & The feature matrix and adjacency matrix of $\mathcal{G}_t$ \\ 
    $\mathcal{G}_{1:T}^{c}$ & The causal dynamic subgraph of $\mathcal{G}_{1:T}$ \\
    $\mathcal{G}_t^{c}$ & The causal subgraph at timestamp $t$ \\
    $\mathcal{N}_{t}^{u}$ & The \textit{L}-neighbors of node $u$ in $\mathcal{G}_t^{c}$  \\
    $\mathcal{G}_{1:T}^{e}$ & The variant dynamic subgraph of $\mathcal{G}_{1:T}$ \\
    $\mathcal{G}_{t}^{e}$ & The variant subgraph of $\mathcal{G}_{1:T}$ at timestamp $t$ \\
    $\mathcal{E}_t^{c}, \mathcal{E}_t^{e}$ & The edges of  $\mathcal{G}_t^{c}$ and  $\mathcal{G}_{t}^{e}$ \\
    $\mathbf{X}_t^e, \mathbf{A}_t^e$ & The feature matrix and adjacency matrix of $\mathcal{G}_t^{e}$ \\ 
    \midrule
    $||$ & The concatenation operation \\
    $D_t^u$ & The degree of node $u$ in $\mathcal{G}_{t}$\\
    $\alpha_{t}^{u,v}$ & The spatio-temporal score of edge $(u,v)$ at $\mathcal{G}_{t'}$ \\
    $\mathbf{h}_t$ & The spatio-temporal matrix of $\mathcal{G}_{t}$ \\
    $h_t^u, h_t^v$ & The spatio-temporal vector of node $u$ and $v$ in $\mathcal{G}_{t}$ \\
    $\beta_{t'}^{u,v}$ & The attention weight of edge $(u,v)$ at $\mathcal{G}_{t'}$ \\
    $\gamma_{t,t'}^{u}$ & The weight between node $u$  at $\mathcal{G}_{t'}$ and $\mathcal{G}_{t}$ \\
    $\mathbf{q}, \mathbf{k}, \mathbf{v}$ & The query, key and value vector \\
    $\boldsymbol{\mu}_{t}, (\boldsymbol{\sigma}_{t})^{2}$ & The mean and variance of environment posterior \\
    $\boldsymbol{\mu}_{t}^{p}, (\boldsymbol{\sigma}_{t}^{p})^{2}$ & The mean and variance of environment prior \\
    $\mathbf{e}_i$ & The sampled environment instance \\
    $\Phi_c, \Phi_e$ & The classifier for causal and variant prediction \\
    $\mathbf{z}$ & The summarized invariant node embeddings \\
    $\mathbf{y}$ & The summarized labels \\
    
    \bottomrule 
\end{tabular} 
\end{table}

\begin{algorithm}[t]
    \caption{Whole Training Algorithm of DyCIL}
    \label{train}
    \renewcommand{\algorithmicrequire}{\textbf{Input:}}
    \renewcommand{\algorithmicensure}{\textbf{Output:}}
    \begin{algorithmic}[1]
        \REQUIRE The dynamic graph $\mathcal{G}_{1:T}=\left\{\mathcal{G}_1, \mathcal{G}_2,\cdots, \mathcal{G}_T\right\}$ with labels $\mathbf{y}=\left\{{Y}_1, {Y}_2,\cdots, {Y}_T\right\}$; Number of training epochs $L$; Number of causal interventions $n$; Causal ratio $r$ and  trade-off wight $\lambda$
        \ENSURE Optimized model $f_{\theta^{}}$ with parameters $\theta$   
        \FOR{ $j=1,2, \cdots, $ To $L$}
            
            \STATE Calculate the spatio-temporal scores $\alpha$ of edges by Eq. \ref{eq_edge_score}
            \STATE Obtain the causal dynamic subgraphs $\mathcal{G}^{c}_{1:T}$ by Eq. \ref{eq_subgraph}. and collect complement of $\mathcal{G}^{c}_{1:T}$ as the variant dynamic subgraph $\mathcal{G}^e_{1;T}$ 
            \STATE Get the invariant spatio-temporal node embedding $\mathbf{z}$ as described in Section \ref{sec_spatt}
            \STATE Infer the posterior distribution $q(\mathbf{E}_{t}|\mathbf{X}_{t}^{e}, \mathbf{A}_t^{e})$ of environment by Eq. \ref{eq12}
            \STATE Learn the prior distribution $p(\mathbf{E}_t|\mathbf{X}_t^e, \mathbf{A}_t^e, t)$ of environment by Eq. \ref{eq14}
            \STATE Calculate the KL divergence $\mathcal{L}_{E}$ of the environment distribution by Eq. \ref{eq_kl_loss}
            \FOR{$i = 1,2, \cdots, n $}
 
                \STATE Sample environment instances $\mathbf{e}_i$ from $q(\mathbf{E}_{t}|\mathbf{X}_{t}^{e}, \mathbf{A}_t^{e})$

                \STATE Conduct causal intervention by Eq. \ref{eq_inervention} 
            \ENDFOR
            \STATE Calculate the intervention loss and task loss by Eq. \ref{eq_causal+loss}
            \STATE Calculate the overall loss $\mathcal{L}$ by Eq. \ref{eq_loss}
            \STATE Update model parameters by minimizing $\mathcal{L}$
        \ENDFOR
        \RETURN Optimized model $f_{\theta^{}}$ with parameters $\theta$
    \end{algorithmic} \label{alg_dycil}
\end{algorithm}

\section{Derivation of BackDoor Adjustment}
\label{app_causal}

In this section, we present the derivation of backdoor adjustment based on some rules \cite{pearl2000models,pearl2016causal}.
\begin{itemize}[leftmargin=*]
    \item \textit{Rule 1: Bayes Rule}:
    \begin{equation}
    \nonumber
        P(Y|X) = \sum\nolimits_{Z} P(Y|X,Z)P(Z|X)
    \end{equation}
        \item \textit{Rule 2: Equivalence Rule}:
    \begin{equation}
    \nonumber
        P(Y|X) = P(Y|X_1,X_2), \quad if (Y|X),  X=\left\{X_1, X_2\right\}
    \end{equation}
    \item \textit{Rule 3: Independency Rule}:
    \begin{equation}
    \nonumber
        P(Y|do(X), do(Z)) = P(Y|do(X)), \quad if (Y \bot Z |X)
    \end{equation}
    \item \textit{Rule 4: Exchange Rule}:
    \begin{equation}
    \nonumber
        P(Y|do(X), do(Z)) = P(Y|do(X), Z), \quad if (Y \bot Z |X)
    \end{equation}

\end{itemize}
Based on the above rules, we can obtain the following,
\begin{equation}
\nonumber
\begin{alignedat}{2}
    P({Y}_t|do({C}_t)) &= \sum\nolimits_{e} P(Y_t|do({C}_t),{E}_t=e)P({E}_t=e|do({C}_t)) &&\quad \\
    &= \sum\nolimits_{e}  P(Y_t|do({C}_t),{E}_t=e)P(E_{t}=e) &&\quad \\
    &= \sum\nolimits_{e}  P(Y_{t}|C_t,E_t=e)P(E_t=e) &&\quad  \\
     &= \sum\nolimits_{e}  P(Y_{t}|S_t,T_{t'},E_t=e)P(E_t=e) &&\quad 
\end{alignedat}
\end{equation}

Then we can stratify $E$ into discrete components $E=\left\{{e_i}\right\}_{i=1}^{|E|}$. Then, we can formalize it as follows,
\begin{equation}
    \nonumber
    P({Y}_t|do({C}_t)) = \sum_{e_{i} \in {E}_t} P({Y}_t|{S}_t, {T}_{t'}, {E}_{t}=e_{i}) P({E}_{t} =e_{i}).
\end{equation}

\section{Proof of Theorem}\label{app_theorem}

\begin{theorem}
Let $\Psi$ be a subgraph generator mapping from $\mathcal{G}_{1:t}$ to a subgraph. Under Assumption 1, the following results hold:

\begin{itemize}
    \item[(i)] The optimal causal subgraph generator $\Psi^*$ satisfies:
    \begin{equation} 
        \Psi^* = \arg\max_{\Psi} I(\Psi(\mathcal{G}_{1:t}); Y_t). \nonumber
    \end{equation}
    
    \item[(ii)] For any generator $\Psi$, the mutual information can be lower bounded by a variational approximation:
    \begin{equation}
        I(\Psi(\mathcal{G}_{1:t}); Y_t) \geq \mathbb{E}_{p(\mathcal{G}_{1:t}, Y_t)} \left[ \log q_\phi(Y_t | \Psi(\mathcal{G}_{1:t})) \right] + H(Y_t), \nonumber
    \end{equation}
    where $q_\phi$ is any variational distribution approximating $p(Y_t|\Psi(\mathcal{G}_{1:t}))$, parameterized by $\phi$, and $H(Y_t)$ is the entropy of $Y_t$.
\end{itemize}
\end{theorem}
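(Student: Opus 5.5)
Part (i) comes from combining the chain rule of mutual information with the conditional independence in Assumption~\ref{ass_1}(1). For any generator $\Psi$, write $\mathcal{G}_{1:t} = (\Psi(\mathcal{G}_{1:t}), \mathcal{G}_{1:t}\setminus\Psi(\mathcal{G}_{1:t}))$. Since $\Psi(\mathcal{G}_{1:t})$ is a deterministic function of $\mathcal{G}_{1:t}$, we have
\begin{equation}
I(\mathcal{G}_{1:t};Y_t) = I(\Psi(\mathcal{G}_{1:t});Y_t) + I(\mathcal{G}_{1:t}\setminus\Psi(\mathcal{G}_{1:t});Y_t \mid \Psi(\mathcal{G}_{1:t})). \nonumber
\end{equation}
The second term is nonnegative, so every $\Psi$ satisfies $I(\Psi(\mathcal{G}_{1:t});Y_t)\le I(\mathcal{G}_{1:t};Y_t)$. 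For the causal generator $\Psi^*$ guaranteed by Assumption~\ref{ass_1}, item (1) states $Y_t \bot (\mathcal{G}_{1:t}\setminus\Psi^*(\mathcal{G}_{1:t}))\mid \Psi^*(\mathcal{G}_{1:t})$. Hence the conditional term vanishes and $I(\Psi^*(\mathcal{G}_{1:t});Y_t)=I(\mathcal{G}_{1:t};Y_t)$, which is the upper bound. So $\Psi^*$ attains the maximum in Eq.~\ref{th_eq_1}.

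Item (2) of the assumption is not needed for maximality itself. I would mention it only to explain why this maximizer is the one we want: its predictive relation is stable across environments, whereas other maximizers may exist.

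This is the main obstacle. The statement as written identifies $\Psi^*$ with the $\arg\max$, but the chain-rule argument only shows that $\Psi^*$ is \emph{a} maximizer, since e.g. the identity map also attains the maximum. I will therefore state the result as ``$\Psi^*\in\arg\max$.'' If a sharper claim is wanted, I would add a minimality or sparsity tie-break, such as the fixed causal ratio $r$ in Eq.~\ref{eq_subgraph}. Under that constraint, supersets containing environment edges are excluded, and item (2) of the assumption can be invoked to argue that any other maximizer must carry the same label information invariantly.

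Part (ii) is the standard Barber--Agakov bound. Write $I(\Psi(\mathcal{G}_{1:t});Y_t)=H(Y_t)-H(Y_t\mid\Psi(\mathcal{G}_{1:t}))$, so that
\begin{equation}
-H(Y_t\mid\Psi(\mathcal{G}_{1:t}))=\mathbb{E}_{p(\mathcal{G}_{1:t},Y_t)}\left[\log p(Y_t\mid\Psi(\mathcal{G}_{1:t}))\right]. \nonumber
\end{equation}
Next, add and subtract $\log q_\phi(Y_t\mid\Psi(\mathcal{G}_{1:t}))$ inside the expectation. The difference equals $\mathbb{E}_{\Psi(\mathcal{G}_{1:t})}\,\mathrm{KL}\big(p(\cdot\mid\Psi(\mathcal{G}_{1:t}))\,\|\,q_\phi(\cdot\mid\Psi(\mathcal{G}_{1:t}))\big)\ge 0$, which gives the stated inequality, with equality iff $q_\phi = p$ almost surely. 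The only care needed is that the expectation over $p(\mathcal{G}_{1:t},Y_t)$ pushes forward correctly to the joint law of $(\Psi(\mathcal{G}_{1:t}),Y_t)$. This holds because $\Psi$ is measurable and deterministic.

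Finally, I would remark that the bound's right-hand side is the negative cross-entropy plus a constant. This links it to $\mathcal{L}_{inv}$ in Eq.~\ref{eq_causal+loss}.
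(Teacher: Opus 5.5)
Your proof is correct. Part (ii) is essentially the paper's argument: add and subtract $\log q_\phi$, then drop the nonnegative expected KL term.

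For part (i) you take a genuinely different route. The paper fixes a maximizer $\hat\Psi$ and invokes the functional representation lemma to write $\hat\Psi(\mathcal{G}_{1:t})=\gamma(\Psi^*(\mathcal{G}_{1:t}),\tilde\Psi(\mathcal{G}_{1:t}))$ with $\tilde\Psi(\mathcal{G}_{1:t})\perp\Psi^*(\mathcal{G}_{1:t})$. It then bounds $I(\hat\Psi(\mathcal{G}_{1:t});Y_t)\le I(\Psi^*(\mathcal{G}_{1:t}),\tilde\Psi(\mathcal{G}_{1:t});Y_t)$ and discards $\tilde\Psi$ using $Y_t=f(\Psi^*(\mathcal{G}_{1:t}))+\epsilon$. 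You instead use the chain rule to exhibit the universal ceiling $I(\mathcal{G}_{1:t};Y_t)$. You then show that $\Psi^*$ reaches it directly from the conditional-independence form of item (1) of Assumption~1.

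Your route needs no auxiliary lemma, names the optimal value explicitly, and uses the assumption exactly where it is needed. The paper justifies discarding $\tilde\Psi$ only by $\tilde\Psi\perp\Psi^*$. What is actually required is $Y_t\perp\tilde\Psi(\mathcal{G}_{1:t})\mid\Psi^*(\mathcal{G}_{1:t})$, which holds if, e.g., $\tilde\Psi$ is built from $\mathcal{G}_{1:t}$ plus independent randomness. Your argument never has to deal with this. The paper's decomposition of a competitor into a causal part plus an independent residual is a common template in invariant-learning proofs. It could be adapted to reason about what extra information a competitor carries, but here it buys nothing beyond your short chain-rule argument.

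Your uniqueness caveat is accurate and applies equally to the paper. Despite its phrasing ``to demonstrate that $\hat\Psi=\Psi^*$'', its proof, like yours, only shows $\Psi^*\in\arg\max$, and item (2) is not substantively used there either. Your optional tie-break via the causal ratio $r$ is only a sketch: item (2) constrains $\Psi^*$, not other maximizers, so it does not by itself give uniqueness. Nothing that is actually proved requires it.
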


\textit{
where $ I(\mathcal{G}_{1:t}; Y_{t})$ is the mutual information between the dynamic graph and label.}

\begin{proof}
For \textit{(i)}, let $\hat{\Psi} = \arg\max_{\Psi} I(\Psi(\mathcal{G}_{1:t}); Y_t)$, based on $\forall \mathbf{e}_i, \mathbf{e}_j \in \mathbf{E}$, $p({Y}_{t}|\Psi (\mathcal{G}_{1:t}), \mathbf{e}_i) = p({Y}_{t}|\Psi (\mathcal{G}_{1:t}), \mathbf{e}_j)$, it’s sufficient to show that $I(\Psi^{*}(\mathcal{G}_{1:t}); Y_{t})) \geq I(\hat{\Psi}(\mathcal{G}_{1:t}); Y_{t})$ to to demonstrate that $\hat{\Psi} = {\Psi}^{*}$.

We introduce a random variable $\tilde{\Psi}$ based on the functional representation lemma \cite{el2011network}, such that $\tilde{\Psi}(\mathcal{G}_{1:t}) \bot \Psi^{*}(\mathcal{G}_{1:t})$ and $\hat{\Psi}(\mathcal{G}_{1:t}) = \gamma(\Psi^{*}(\mathcal{G}_{1:t}), \tilde{\Psi}(\mathcal{G}_{1:t}))$, where $\gamma$ is a function. And, we can have that:
\begin{equation}
    I(\hat{\Psi}(\mathcal{G}_{1:t}); Y_{t}) = I(\gamma(\Psi^{*}(\mathcal{G}_{1:t}), \tilde{\Psi}(\mathcal{G}_{1:t})); Y_{t}) \nonumber
\end{equation}
Based on properties of mutual information and independence constraints, the above equation can be decomposed as follows:

\begin{equation}
    \begin{aligned}
        I(\gamma(\Psi^{*}(\mathcal{G}_{1:t})&, \tilde{\Psi}(\mathcal{G}_{1:t})); Y_{t}) \\
        & \leq I(\Psi^{*}(\mathcal{G}_{1:t}), \tilde{\Psi}(\mathcal{G}_{1:t}); Y_{t}) \\ \nonumber
        & = I(\Psi^{*}(\mathcal{G}_{1:t}), \tilde{\Psi}(\mathcal{G}_{1:t}); f(\Psi^{*}(\mathcal{G}_{1:t})) + \epsilon) \\ 
        \nonumber
    \end{aligned}
\end{equation}
Moreover, since $\tilde{\Psi}(\mathcal{G}_{1:t}) \bot \Psi^{*}(\mathcal{G}_{1:t})$, we can derive that:
\begin{equation}
\begin{aligned}
    I(\Psi^{*}(\mathcal{G}_{1:t}), \tilde{\Psi}(\mathcal{G}_{1:t})&; f(\Psi^{*}(\mathcal{G}_{1:t})) + \epsilon) \\
    & =     I(\Psi^{*}(\mathcal{G}_{1:t}); f(\Psi^{*}(\mathcal{G}_{1:t})) + \epsilon) \\ 
    & = I(\Psi^{*}(\mathcal{G}_{1:t}); Y_{t})
    \nonumber
\end{aligned}
\end{equation}
The proof is finished.
\qed

For \textit{(ii)}, We apply the standard variational lower bound of mutual information. For any joint distribution $p(X,Y)$ and any variational distribution $q_\phi(Y|X)$, we have:
\[
I(X; Y) = \mathbb{E}_{p(X,Y)} \left[ \log \frac{p(Y|X)}{p(Y)} \right].
\]
Subtracting and adding $q_\phi(Y|X)$ inside the logarithm yields:
\[
= \mathbb{E}_{p(X,Y)} \left[ \log \frac{q_\phi(Y|X)}{p(Y)} \right] + \mathbb{E}_{p(X,Y)} \left[ \log \frac{p(Y|X)}{q_\phi(Y|X)} \right].
\]
The second term is the KL divergence $\text{KL}(p(Y|X) \| q_\phi(Y|X)) \geq 0$, so we have:
\[
I(X; Y) \geq \mathbb{E}_{p(X,Y)} \left[ \log q_\phi(Y|X) \right] + H(Y),
\]
where $H(Y) = -\mathbb{E}_{p(Y)} \log p(Y)$ is the entropy of $Y$.

Setting $X = \Psi(\mathcal{G}_{1:t})$ gives the desired result:
\[
I(\Psi(\mathcal{G}_{1:t}); Y_t) \geq \mathbb{E}_{p(\mathcal{G}_{1:t}, Y_t)} \left[ \log q_\phi(Y_t|\Psi(\mathcal{G}_{1:t})) \right] + H(Y_t).
\]

The proof is finished.
\end{proof}

\section{Complexity Analysis} \label{app_complex}
We analyze the computational complexity of DyCIL. We define $|\mathcal{N}_t|$ and $|\mathcal{E}_t|$ as the number of nodes and edges in $\mathcal{G}_t$, $|\mathcal{N}_t^c|$ and $|\mathcal{E}_t^c|$ as the number of nodes and edges in $\mathcal{G}_t^c$, and $|\mathcal{N}_t^e|$ and $|\mathcal{E}_t^e|$ as the number of nodes and edges in $\mathcal{G}_t^e$, respectively. For simplicity, we define \(d\) as the unified dimension. For the causal dynamic subgraph generator, the computational complexity is $\mathcal{O}(T(|\mathcal{N}_t|d^2 + |\mathcal{E}_t|d))$, where $T$ is the number of snapshots in $\mathcal{G}_{1:T}$. The causal-aware spatio-temporal attention module has a computational complexity of $\mathcal{O}(T(|\mathcal{N}_t^c|d^2+|\mathcal{E}_t^c|d + |\mathcal{E}_t^c|) + Td^2+T^2|\mathcal{N}_t^c|d)$. For the adaptive environment generator, the computational complexity is $\mathcal{O}(T(|\mathcal{N}_t^e|d^2 + |\mathcal{E}_t^e|d)$. Our causal intervention mechanism has a computational complexity of $\mathcal{O}(T(|\mathcal{E}_t|dn))$, where $n$ is the number of environment instances and is commonly set as a small constant. In summary, the complexity of DyCIL is $\mathcal{O}(T(|\mathcal{N}_t|d^2 + |\mathcal{E}_t|d + |\mathcal{E}_t^c| + d^2 + T|\mathcal{N}_t^c|d + |\mathcal{E}_t|dn )$.  It can be seen that DyCIL has a linear computational complexity with respect to the number of nodes and edges in dynamic graphs, which is similar to most DyGNNs. We conduct time cost experiments in Section \ref{time} to further analyze the efficiency and scalability of DyCIL.

\section{Reproducibility Details}\label{app_details}
\subsection{Details of DyCIL} For the causal subgraph generator, we use a two-layer GCN \cite{kipf2016semi} and a two-layer MLP to calculate the spatio-temporal score for each edge. The layer of causal-aware spatio-temporal attention is set as 2 for the Temporal-Motif dataset and 1 for other datasets. For the adaptive environment generator, we adopt a two-layer GNN to infer the posterior distribution of the environment (Eq. 14 and Eq. 15), and we use a one-layer GCN and a two-layer MLP to learn the prior distribution of the environment ( Eq. 16 and Eq. 17).
\subsection{Hyperparameters}
We adopt the hidden dimension as 64 for Aminer and 32 for other datasets. The number of max epochs is fixed at 1000, and the patience for early stopping is set to 50 for all tasks and datasets based on the the performance of validation set. 
The learning rate is set to 1e-2 and 2e-3 for link prediction and node classification, respectively.
We set the trade-off weight $\lambda$ to 1, 1e-1, 1, 1e-3, 1e-3 for Collab, ACT, Synthetic-Collab, Temporal-Motif, and Aminer respectively. We set causal ratio $r$ to 0.2, 0.2, 0.3, 0.4, 0.1 for Collab, ACT, Synthetic-Collab, Temporal-Motif, and Aminer datasets respectively.  We set the number of causal interventions to $10*T$ for all datasets, \textit{i.e.} the number of environment instances sampled from the distribution $n$ is $10*T$. We only perform causal interventions during the training phase. We use the invariant node embeddings $\mathbf{z}$ to compute the predicted labels in the testing phase.

\subsection{Evaluation Details} For link prediction tasks, we sample negative examples equal in number to the existing edges to compute the binary cross entropy loss $\ell$. We use an inner product as the task classifier and adopt the area under the ROC curve (AUC) scores as the metric. In the Table of experiments, 'w/o OOD' and 'w/ OOD' represent test data without distribution shift and with distribution shift, respectively.  For the node classification tasks, we use a one-layer MLP as the task classifier, employ cross entropy as the loss function $\ell$, and adopt Accuracy (ACC) as the metric. Reported experimental results are the mean and standard deviation of 3 runs to avoid random errors. In the table In the Table of experiments, 'OOM' denotes out of memory in an NVIDIA GeForce RTX 3090 with 24 GB of memory. Since general OOD generalization methods (IRM, VREx, GroupDRO) are not specifically designed for dynamic graphs, we adopt DySAT as their backbone for fair comparison. For all baselines, we follow the same settings described in their original papers. To evaluate Graph OOD methods designed for static graphs  (EERM, DIR, GALA), we transform the dynamic graph into a large static graph and assess their OOD generalization performance in this setting. Since EpoD \cite{yang2024improving} has not released its source code, we don't include it in our comparison.

\subsection{Configurations}
We conduct all experiments under the following environment:
\begin{itemize}[leftmargin=*]
    \item  Operating System: Ubuntu 22.04.1 LTS
    \item  CPU: Intel(R) Xeon(R) Gold 6330 CPU @ 2.00GHz
    \item GPU:  NVIDIA GeForce RTX 3090 with 24 GB of memory
    \item Software: Python: 3.9, Cuda: 11.3, Cudnn:8.3.2\_0 , PyTorch: 1.12.0 \cite{paszke2019pytorch}, PyTorch Geometric: 2.3.0 \cite{fey2019fast}
\end{itemize}

\section{Datasets and Baselines Details} \label{app_datasets}

\subsection{Baselines}
Considering that our research focuses on OOD generalization in dynamic graphs, we primarily compare our method with representative approaches from DyGNNs, OOD generalization, graph OOD generalization, and dynamic graph OOD generalization.
\begin{itemize}[leftmargin=*]
    \item \textbf{DyGNNs}: GCRN \cite{seo2018structured} first combines GCN with RNN to capture spatio-temporal information in graph data. EvolveGCN \cite{pareja2020evolvegcn} employs an RNN to update the parameters of GCN to capture the temporal dynamics in dynamic graphs. DySAT \cite{sankar2020dysat} utilizes joint structural attention and temporal self-attention for dynamic graphs to obtain node embeddings.
    \item \textbf{OOD generalization methods:} IRM \cite{arjovsky2019invariant} targets to minimize the empirical risks across all training domains via learning an invariant predictor. V-REx \cite{krueger2021VREx} minimizes the empirical risk by paying more attention to domains with larger errors. GraphDRO \cite{sagawa2019distributionally} aims to reduce the risk gap across training domains to boost generalization ability of the model under distribution shifts.
    \item \textbf{Graph OOD generalization methods:} DIR \cite{wu2022dir} aims to discover invariant rationale in graphs via interventions on the training distribution to achieve OOD generalization. EERM \cite{wu2022handling} resorts to multiple context explorers that are adversarially trained to maximize the variance of risks from multiple virtual environments.
    \item \textbf{Dynamic Graph OOD generalization methods:} DIDA \cite{zhang2022dynamic} proposes a disentangled spatio-temporal attention network to capture the variant and invariant patterns via causal intervention. SILD \cite{zhang2023spectral} handles distribution shifts in dynamic graphs by capturing and utilizing invariant and variant spectral patterns. EAGLE \cite{yuan2023eagle} introduces a framework for OOD generalization by modeling complex coupled environments and exploiting spatio-temporal invariant patterns. OOD-Linker \cite{tieu2025oodlinker} proposes an error-bounded invariant link selector that can distinguish invariant components to make the model generalizable for different testing scenarios.

\end{itemize}


\subsection{Real-Word Datasets}
We use three real-world datasets. Collab \cite{tang2012cross} is an academic collaboration network with papers published in 16 years; ACT \cite{kumar2019predictingact} describes students behaviors on a MOOC platform within 30 days. Aminer \cite{tang2008arnetminer} is a citation network extracted from different publishers in 17 years. Following the same setting in \cite{zhang2022dynamic}, the challenging inductive future link prediction task is adopted on Collab and ACT, where the model should predict the links in the next time step via exploiting historical graph snapshots. To evaluate the model’s generalization ability under OOD shifts, we assess the model on another dynamic graph with different domains, which is unobserved during training. Following the same setting in \cite{wu2022handling}, we also adopt the challenging inductive node classification task on the Aminer dataset, where the test nodes are strictly unobserved during training, which is more practical and challenging in real-world dynamic graphs. In this dataset, the patterns may vary in different graph snapshots due to the development of deep learning.

\textbf{Collab} \cite{tang2012cross} is an academic collaboration dataset with papers that were published during 1990-2006 (16 graph snapshots). Nodes and edges represent authors and co-authorship, respectively. Based on the co-authored publication, there are five attributes in edges, including “Data Mining”, “Database”, “Medical Informatics”, “Theory” and “Visualization”. We pick “Data Mining” as the shifted attribute. We apply word2vec \cite{mikolov2013word2vec} to extract 32-dimensional node features from paper abstracts. We use 10/1/5 chronological graph snapshots for training, validation, and testing, respectively. The dataset includes 23,035 nodes and 151,790 links in total.

\textbf{ACT} \cite{kumar2019predictingact} describes student actions on a MOOC platform within a month (30 graph snapshots). Nodes represent students or targets of actions, edges represent actions. Considering the attributes of different actions, we apply K-Means \cite{hartigan1979algorithm} to cluster the action features into five categories and randomly select a certain category (the 5th cluster) of edges as the shifted attribute. We assign the features of actions to each student or target and expand the original 4-dimensional features to
32 dimensions by a linear function. We use 20/2/8 chronological graph snapshots for training, validation, and testing, respectively. The dataset includes 20,408 nodes and 202,339 links in total.

\textbf{Aminer} \cite{tang2008arnetminer} is a citation network extracted from DBLP, ACM, MAG, and other sources. We select the top 20 venues, and the task is to predict the venues of the papers. We use word2vec [5] to extract
128-dimensional features from paper abstracts and average to obtain paper features. The distribution shift may come from the out-break of deep learning. We train on papers published between 2001 - 2011, validate on those published in 2012-2014, and test on those published since 2015. The dataset includes 43,141  nodes and 851,527 links in total.

\begin{figure}[t]
\centering
\includegraphics[width=1.0\columnwidth]{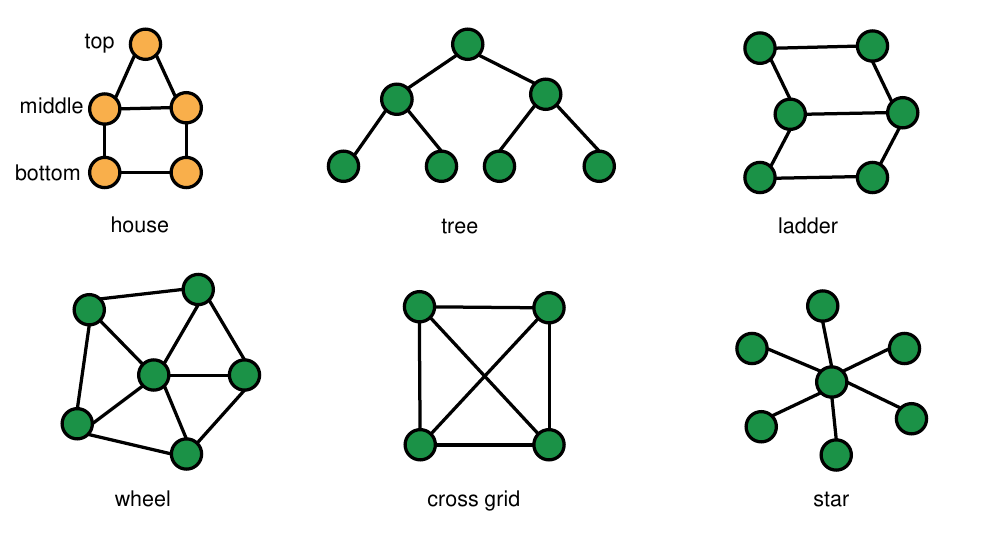} 
\caption{Illustration of the six motif structures in Temporal-Motif, where the 'house' motif is the causal subgraph, and the others are variant subgraphs.}
\label{fig_mptif}    
\end{figure}

\begin{figure*}[ht]
\centering
\subfloat{
\centering
\includegraphics[width=0.185\linewidth]{fig/collab_abl.pdf}
}%
\subfloat{
\centering
\includegraphics[width=0.19\linewidth]{fig/ACT_abl.pdf}
}%
\subfloat{
\centering
\includegraphics[width=0.19\linewidth]{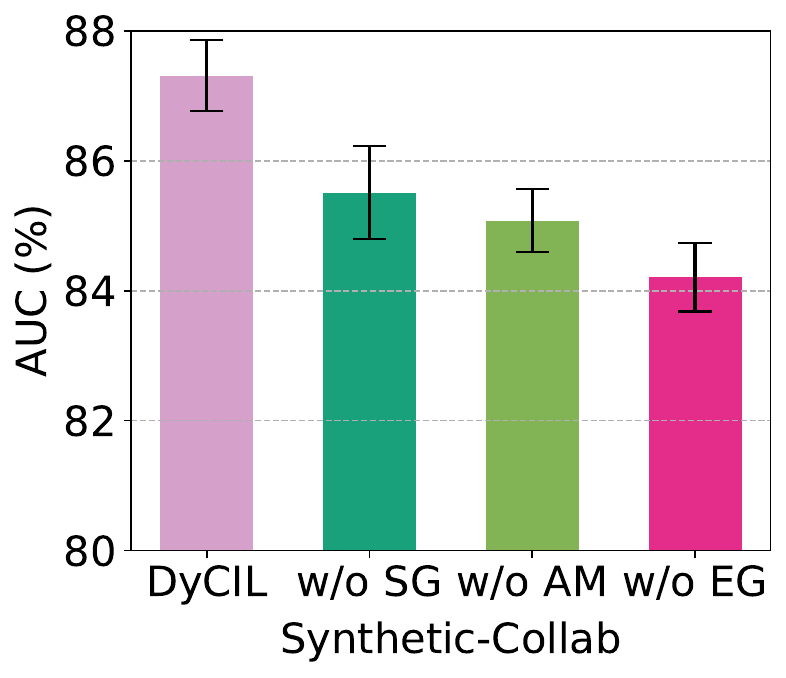}
}%
\subfloat{
\centering
\includegraphics[width=0.19\linewidth]{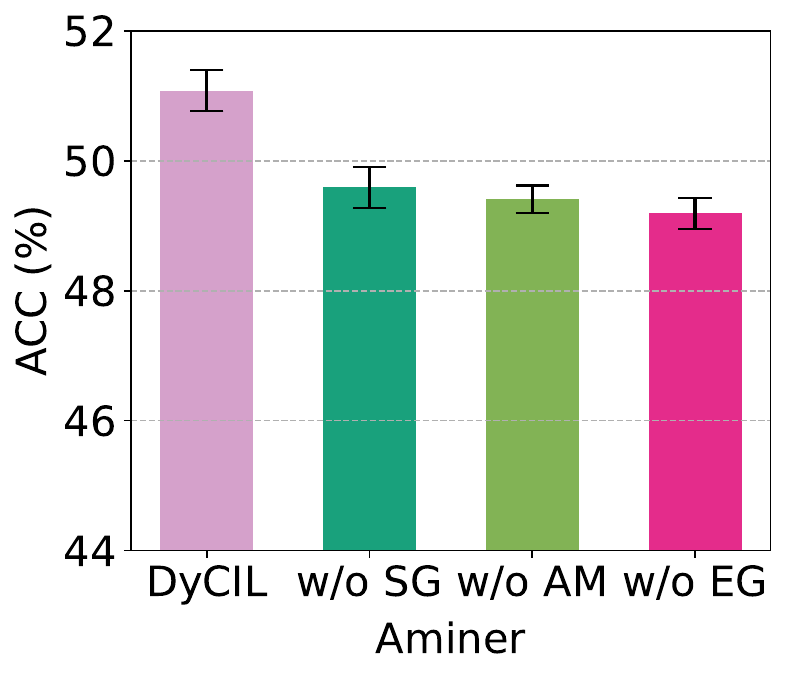}
}%
\subfloat{
\centering
\includegraphics[width=0.19\linewidth]{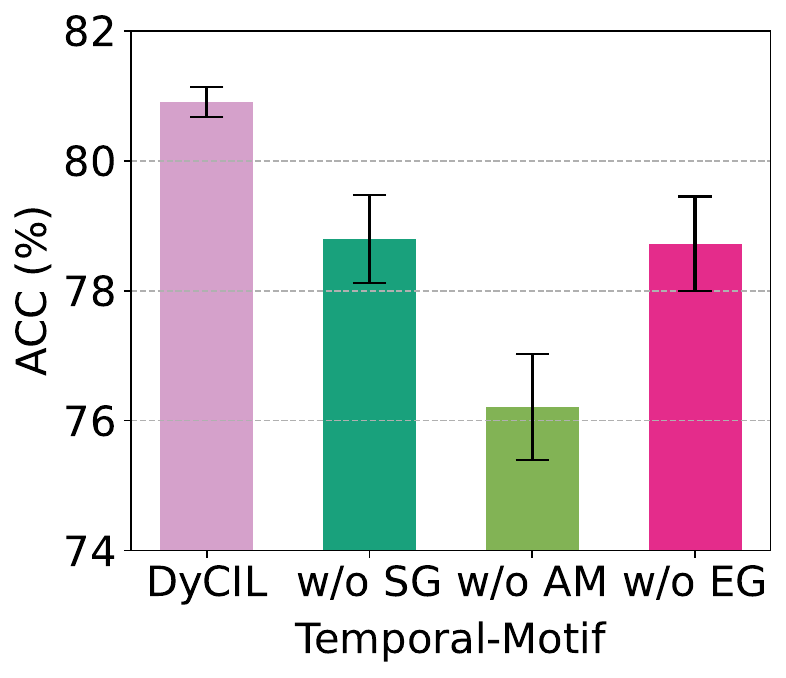}
}%
\centering
\caption{The ablation experiment results.}
\label{figure_ablations}
\end{figure*}

\subsection{Synthetic Datasets}
To further measure the model’s generalization ability under OOD shifts, we synthesize two datasets for node classification and link prediction by introducing manually designed distribution shifts. For link prediction datasets, we introduce variant node features on the original dataset Collab by following \cite{zhang2022dynamic}, where these variant features are generated with spurious correlations in the next timestamps. The degree of OOD shift is determined by a shift level parameter $\bar{p}$. We set $\bar{p}$ to 0.4, 0.6, and 0.8 for training and 0.1 for evaluation, which creates different degrees of OOD shifts between the training data and test data. For the node classification task, we synthesize a new dynamic network called Temporal-Motif by following \cite{ying2019gnnexplainer}. We use the 'house' motif as the causal subgraph and denote other motifs ('tree'-motif, 'wheel'-motif, and so on) as variant subgraphs. In this dataset, the node labels are determined by the 'house' motif solely, and the variant subgraphs in the test data are different from the training data, causing the distribution shift.

\textbf{Synthetic-Collab} \cite{zhang2022dynamic} introduces manual-designed distribution shift on Collab dataset. Denote the original features and structures in Collab as $\mathbf{X}_t^1$ and $\mathbf{A}_t^1$, respectively. We introduce features $\mathbf{X}_t^2$ with a variable correlation with the labels, which are obtained by training the embeddings $\mathbf{X}_t^2 \in \mathbb{R}^{N \times d}$ with the reconstruction loss $\ell(\mathbf{X}_t^2 (\mathbf{X}_t^2)^T, \tilde{\mathbf{A}}_{t+1})$, where $\tilde{\mathbf{A}}_{t+1})$ is the sampled links, and $l$ refers to cross-entropy loss function. In this way, the generated features can have strong correlations with the sampled links. For each timestamp $t$, we uniformly sample $p(t)|\mathcal{E}^{t+1}|$ positive links and $(1 - p(t))|\mathcal{E}^{t+1}|$ negative links in $\mathbf{A}^{t+1}$ and sampling probability $p(t)={\rm clip}(\bar{p} + \sigma \cos(t), 0, 1)$ refers to the intensity of shifts. By controlling the parameter $p$, we can control the correlations of $\mathbf{X}_t$ and labels $\mathbf{A}_{t+1}$ to vary in the training and test stage. Since the model observes the $\mathbf{X}_{t} = [\mathbf{X}_1^{t}||\mathbf{X}_2^{t}]$ simultaneously and the variant features are not marked, the model should discover and get rid of the variant features to handle distribution shifts. Similar to the Collab dataset, we use 10,1,5 chronological graph slices for training, validation, and testing respectively.

\textbf{Temporal-Motif} introduces manually designed distribution shifts for node classification tasks, by following \cite{ying2019gnnexplainer}. We use six different motif structures to synthesize a dynamic motif network, namely the 'house' motif, 'tree' motif, 'ladder' motif, 'wheel' motif, 'cross grid' motif, and 'star' motif. Figure \ref{fig_mptif} shows the structures of these motifs. We designate the 'house' motif as the causal subgraph, meaning that the node labels are determined solely by the 'house' motif. Nodes are assigned to 3 classes based on their structural roles. In a 'house' motif, there are 3 types of roles: the top, middle, and bottom node of the house. corresponding to 3 different classes. For snapshot $\mathcal{G}_t$, the number of 'house' motifs is $25(t+10) + \epsilon$, where $\epsilon$ is a random number between 0 and 10. For training data, we randomly select one structure from the 'tree' motif, 'ladder' motif, and 'wheel' motif as the variant subgraph. The 'cross grid' motif and 'star' motif are used as the variant subgraph $\mathcal{G}_t^{e}$ for validation data and test data, respectively. The variant subgraph $\mathcal{G}_t^{e}$ will be merged with the causal subgraph $\mathcal{G}_t^{c}$, and then $|\mathcal{E}_t^c|$ random perturbation edges will be added to generate the final snapshot $\mathcal{G}_t$. To ensure that the 'house' motif determines the label of each node, we set the number of nodes in the variant subgraph to be less than or equal to the number of nodes in the causal subgraph. We generate the 8-dimensional feature matrix following a Gaussian distribution. This dataset has different variant subgraphs, so the model must capture the causal subgraph across different environments to make accurate predictions.

\begin{figure*}[t]
\centering
\subfloat[Collab]{
\centering
\includegraphics[width=0.23\linewidth]{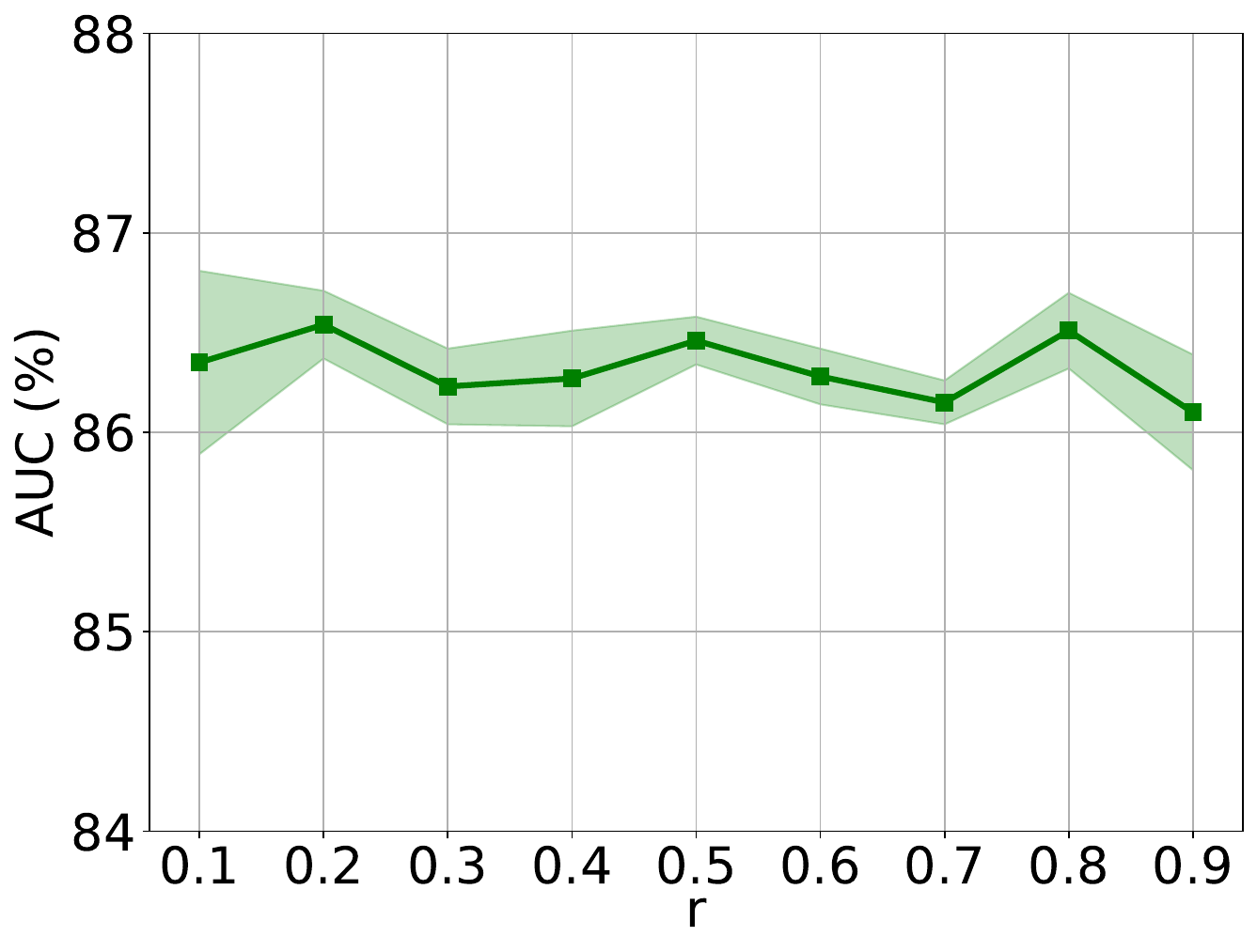}
}%
\subfloat[ACT]{
\centering
\includegraphics[width=0.23\linewidth]{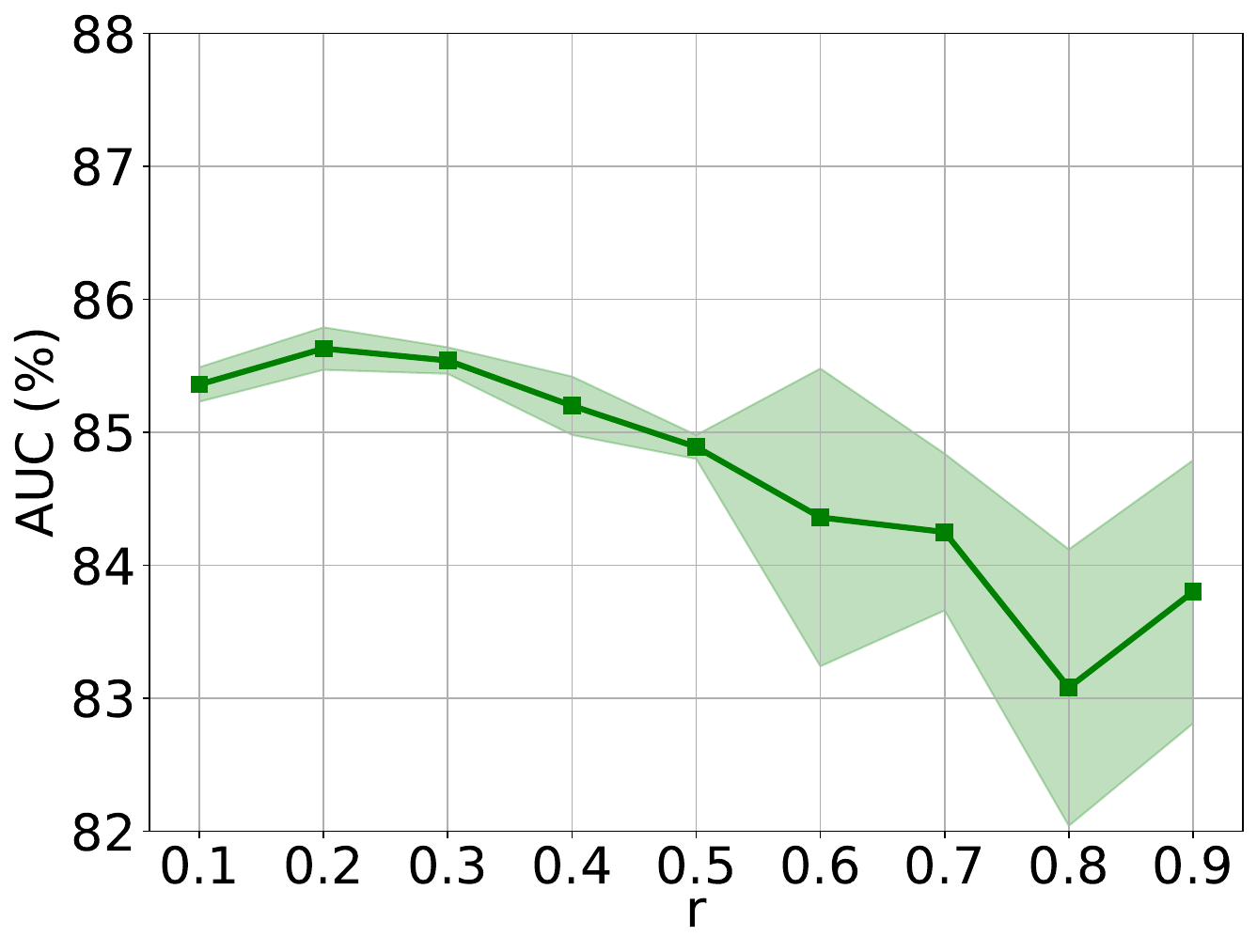}
}%
\subfloat[Aminer]{
\centering
\includegraphics[width=0.23\linewidth]{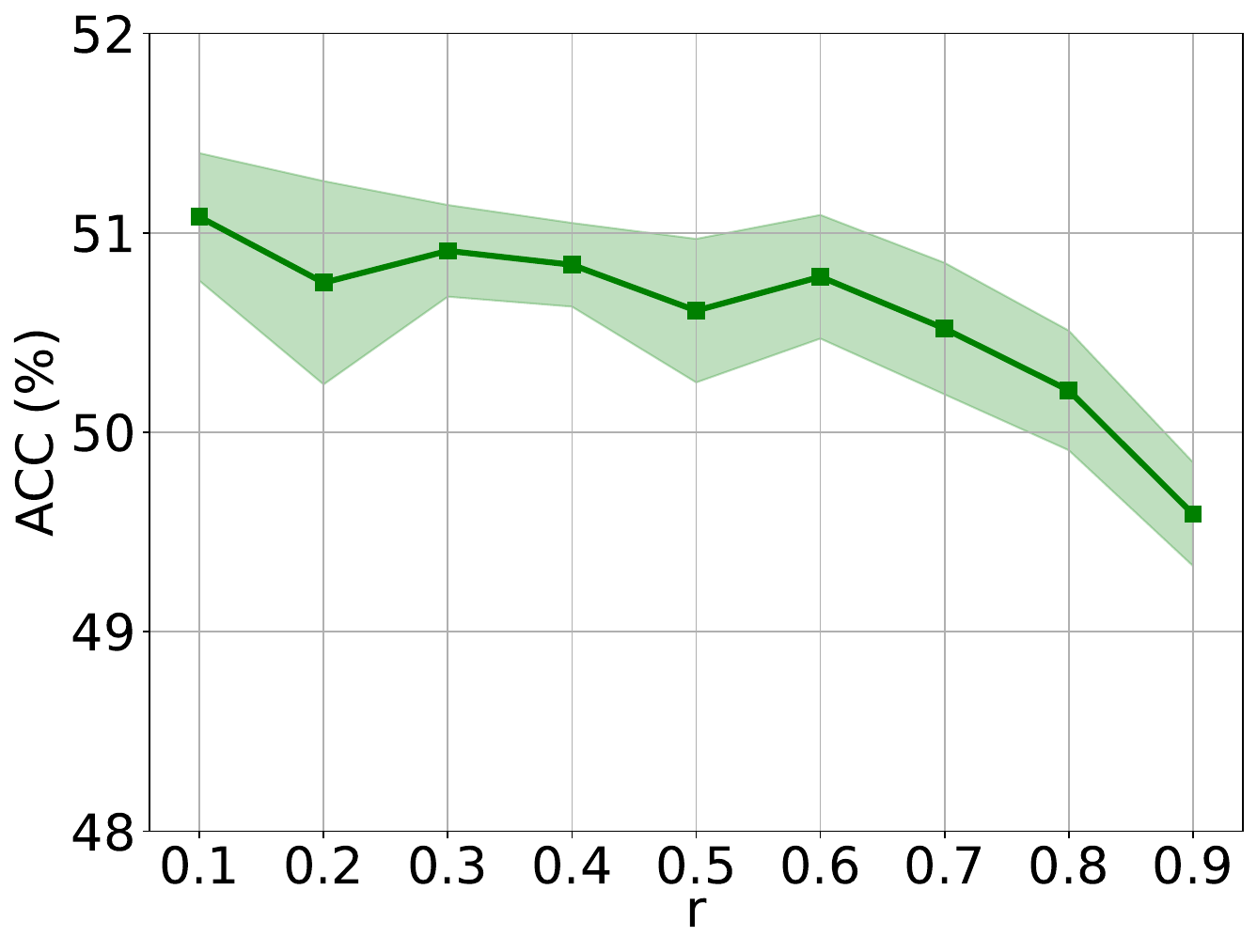}
}%
\subfloat[Temporal-Motif]{
\centering
\includegraphics[width=0.23\linewidth]{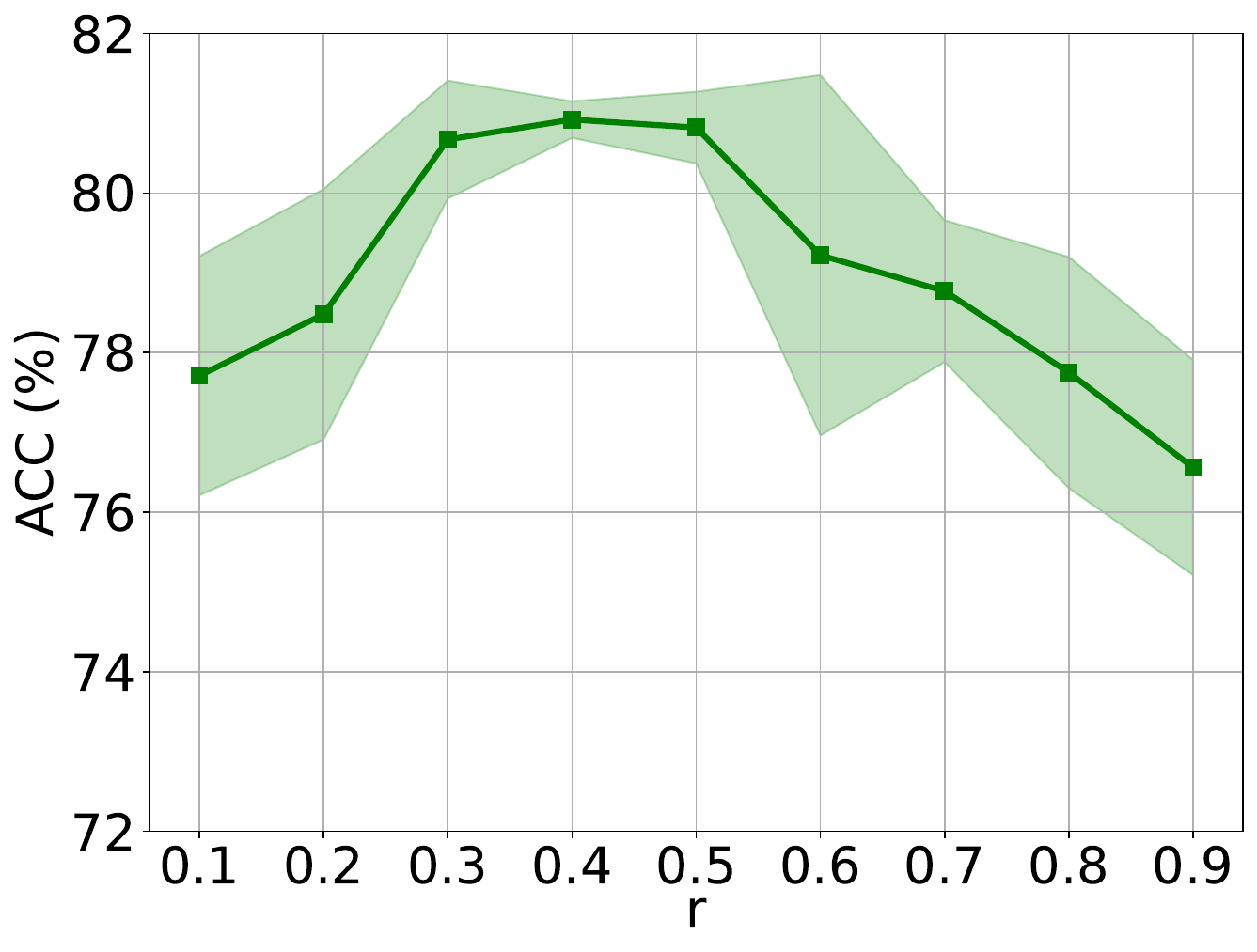}
}%

\centering
\caption{Parameter sensitivity analysis on causal ratio $r$.}
\label{figure_par_sen1}
\end{figure*}

\begin{figure*}[t]
\centering
\subfloat[Collab]{
\centering
\includegraphics[width=0.23\linewidth]{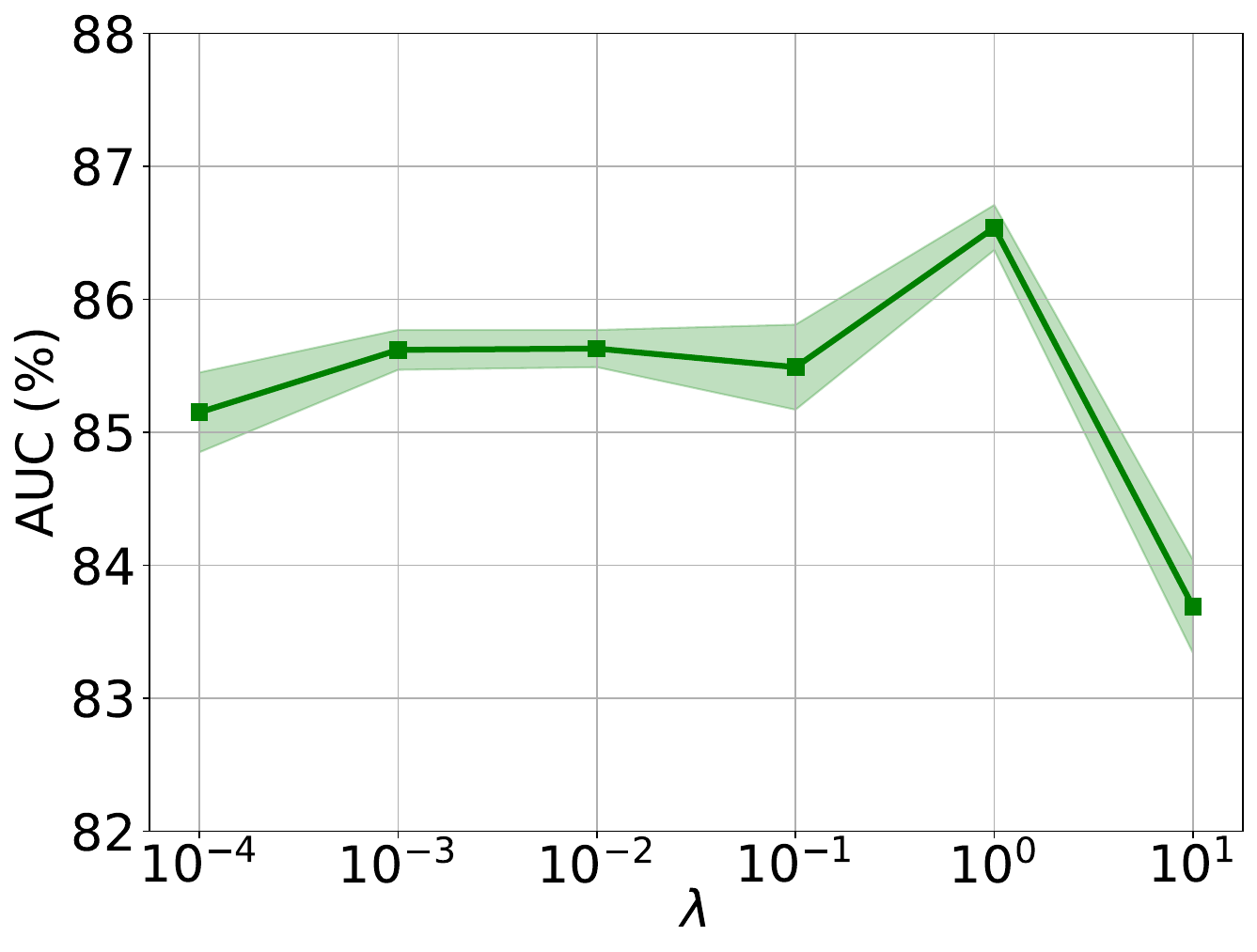}
}%
\subfloat[ACT]{
\centering
\includegraphics[width=0.23\linewidth]{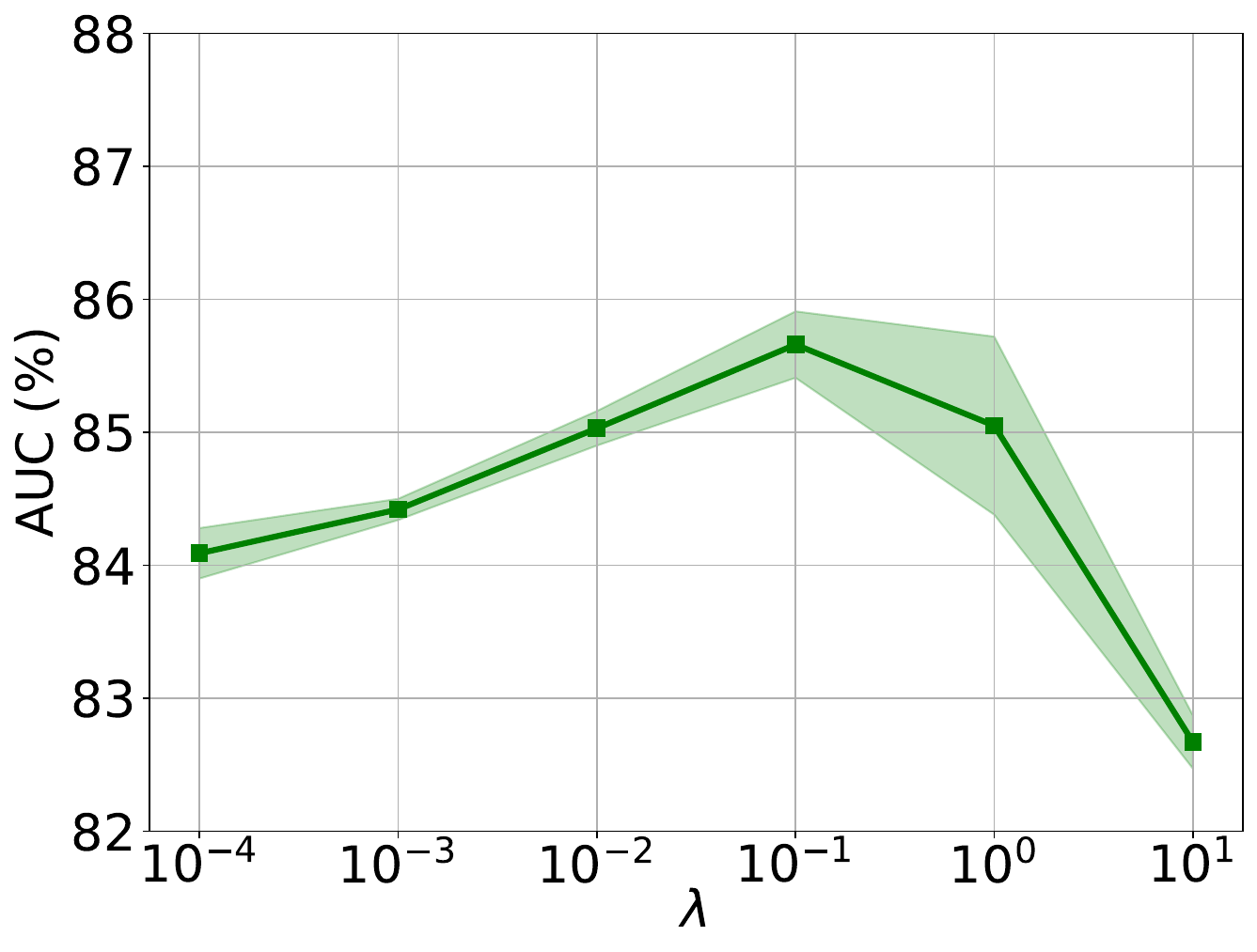}
}%
\subfloat[Aminer]{
\centering
\includegraphics[width=0.23\linewidth]{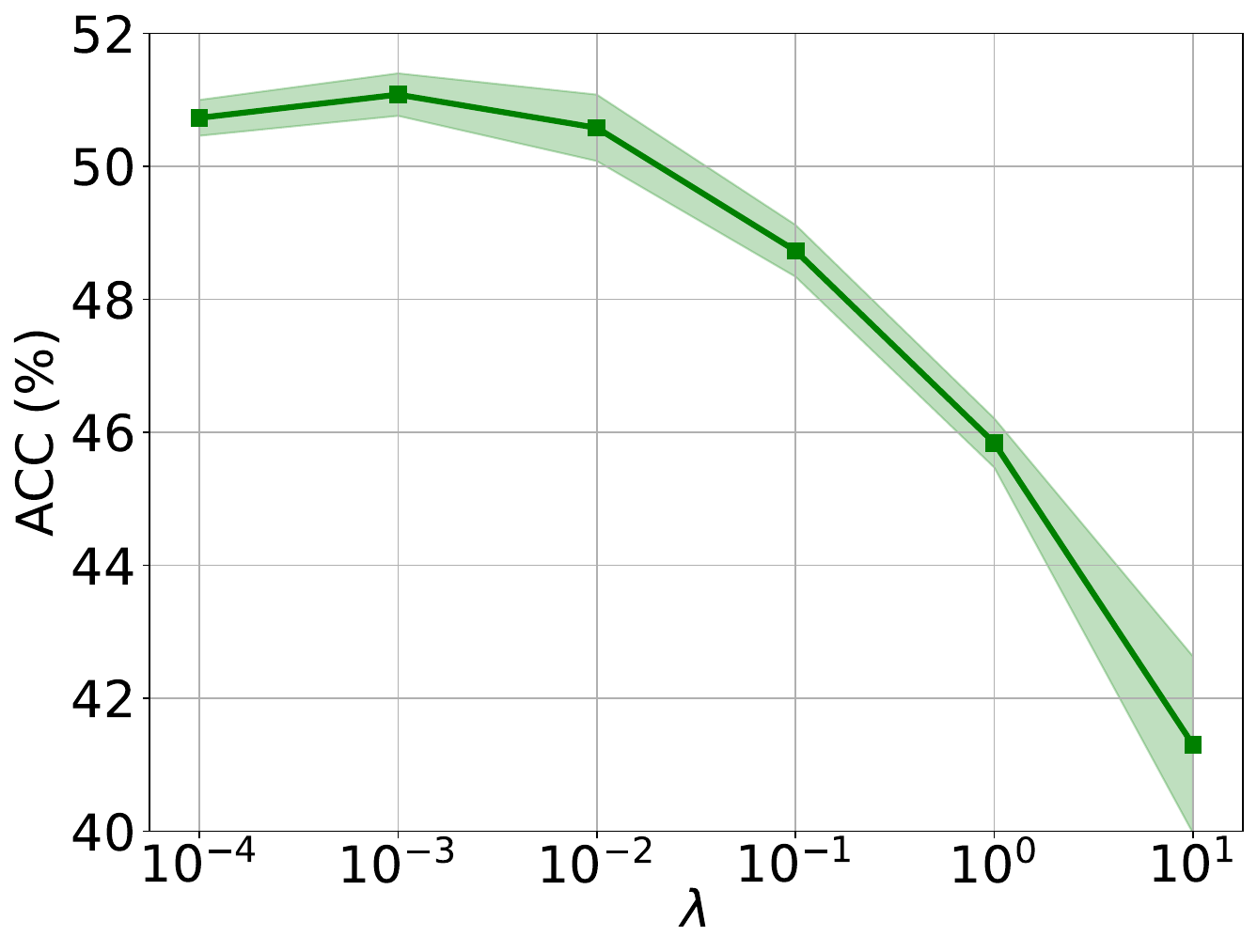}
}%
\subfloat[Temporal-Motif]{
\centering
\includegraphics[width=0.23\linewidth]{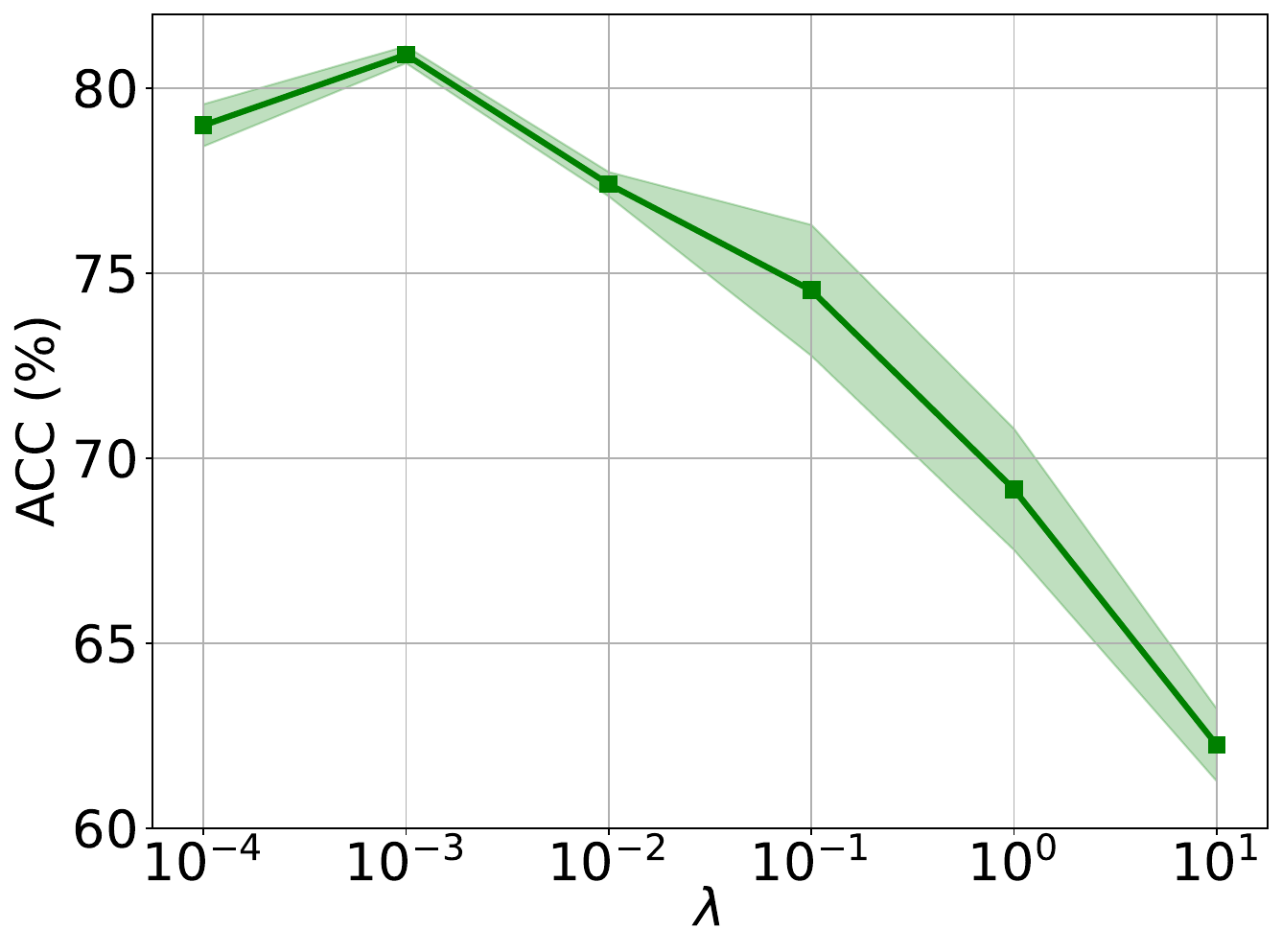}
}%

\centering
\caption{Parameter sensitivity analysis on trade-off weight $\lambda$.}
\label{figure_par_sen2}
\end{figure*}

\section{More Expeiments} \label{app_experiments}

\subsection{Ablation Study}
\begin{itemize}[leftmargin=*]
    \item \textbf{w/o SG}: DyCIL without the dynamic causal subgraph generator, \textit{i.e.}, we utilize the complete dynamic graph as input for the causal-aware spatio-temporal attention and adaptive environment generator.
    \item \textbf{w/o AM}: DyCIL without the causal-aware spatio-temporal attention module, \textit{i.e.}, we utilize DySAT instead of this module to extract invariant spatio-temporal patterns.
    \item \textbf{w/o EG}: DyCIL without adaptive environment generator, \textit{i.e.}, we don't use the generated environment instances sampled from distribution to perform causal interventions.
\end{itemize}

We present the further ablation experiment results in Figure \ref{figure_ablations}. Overall, DyCIL outperforms all its variants across various datasets. Removing any one of the three components leads to a significant performance drop, further validating the effectiveness of each component of our model. The dynamic causal subgraph generator plays a more significant role on the Temporal-Motif dataset, as the causal subgraphs in this dataset are constructed based on motif, allowing our model to generate more accurate subgraphs. In contrast, the causal-aware spatio-temporal attention and environment distribution generator contribute more substantially to performance on real-world datasets, where the evolution patterns are more complex. By capturing the evolution rationale and inferring environment distribution, these components help the model better identify invariant patterns. Specifically, when the dynamic causal subgraph generator is removed, the extracted invariant node embedding and the generated environment instances contain redundant information, leading to an overall performance decline. Removing the causal-aware spatio-temporal attention results in an inability to accurately capture the invariant spatio-temporal patterns of the identified causal dynamic subgraphs. Furthermore, removing the adaptive environment generator hinders the model from leveraging causal interventions to enhance OOD generalization capabilities. In summary, DyCIL jointly optimizes three mutually promoting modules to effectively capture invariant spatio-temporal patterns and achieve OOD generalization under distribution shifts in dynamic graphs.

\subsection{Parameter Analysis}

We perform parameter sensitivity analysis to show
the effect of causal ratio $r$ and trade-off weight $\lambda$. Figure \ref{figure_par_sen1} and Figure \ref{figure_par_sen2} correspond to the experiment results of causal ratio $r$ and trade-off weight $\lambda$. 
\subsubsection{Effect of causal ratio $r$} We can observe that the model performs better when $r$ takes smaller values in most cases. Specifically, DyCIL achieves optimal performance with $r$ set to 0.2, 0.2, 0.1, and 0.4 for the Collab, ACT, Aminer, and Temporal-Motif datasets, respectively. This is because, for most real-world datasets, dynamic graphs contain more spurious correlations while the invariant causal rationale is fewer. A larger $r$ would result in the identified causal dynamic subgraph containing more spurious correlations. In addition, as $r$ increases, the performance of DyCIL tends to decline.  Therefore, a smaller causal ratio (\textless0.4) can help the model learn more informative patio-temporal patterns. In the Temporal-Motif dataset, we set the number of 'house' motifs to account for approximately 40 \% of the dynamic graph such that DyCIL achieves its best performance at $r=0.4$. 
\subsubsection{Effect of trade-off weight $\lambda$}We can observe that as $\lambda$ is too large or too small, the performance of the DyCIL all show varying degrees of decline. A proper $\lambda$ can help the model more effectively capture invariant spatio-temporal patterns in dynamic graphs. Trade-off weight $\lambda$ is related to the complexity of dynamic graphs. When the evolution and interactions in the dynamic graph are more complex, the causal subgraph becomes harder to extract. In such cases, a larger $\lambda$ can make the model focus more on extracting invariant patterns. It shows that $\lambda$ acts as a balance between how DyCIL exploits the patterns and satisfies the invariance constraint.

\begin{figure}[t]
\centering
\subfloat[Collab]{
\centering
\includegraphics[width=0.47\linewidth]{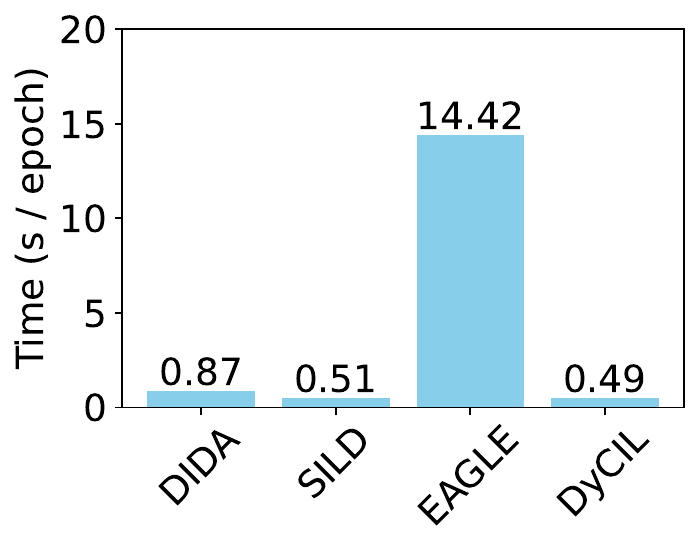}
}%
\subfloat[ACT]{
\centering
\includegraphics[width=0.47\linewidth]{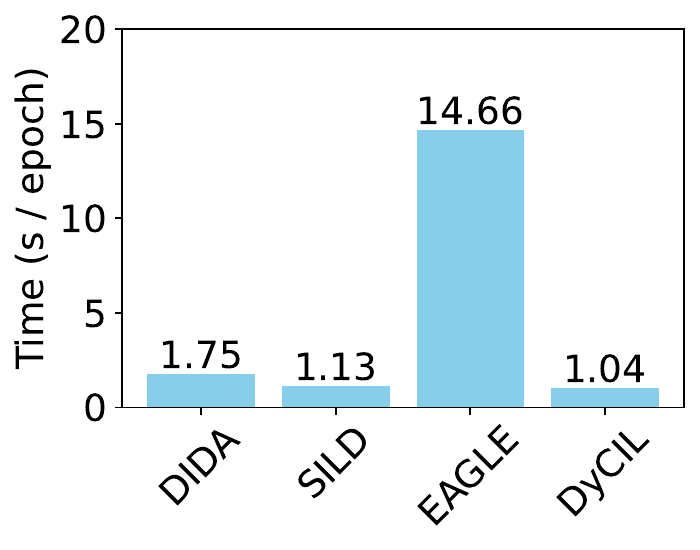}
\label{fig_time} 
}%

\centering
\caption{The training time cost per epoch of DyCIL and other three dynamic graphs OOD methods on Collab and ACT datasets}
\label{figure_time_cost}
\end{figure}


\subsection{Time Cost} \label{time}
We give the computational complexity analysis in Section \ref{app_complex}. Here, we conduct training time cost experiments between DyCIL and other dynamic graphs OOD methods to analyze the scalability and efficiency of DyCIL. We show the result in Figure \ref{figure_time_cost}. Clearly, DyCIL requires less runtime compared to several existing dynamic graph OOD methods. Specifically, EAGLE consumes a significant amount of time due to its reliance on extensive information for inferring environment distributions. In contrast, DyCIL, DIDA, and SILD all have linear computational complexity, placing their time costs on the same scale. However, DyCIL benefits from its three mutually promoting modules, which enhances its invariant spatio-temporal pattern extraction capability without requiring more network layers stacking. This results in lower time overhead. In summary, DyCIL demonstrates superior OOD generalization abilities and maintains sound computational efficiency, showcasing excellent scalability.

\section{Detailed Related work}
\subsection{Dynamic Graph Neural Networks} 
Dynamic graph neural networks have attracted increasing research attention to incorporate temporal dynamics and graph topology features into node representation since most real-world graphs are dynamic \cite{huang2022dlp,huang2023TGB}. Based on the form of dynamic graphs, DyGNN methods can be divided into two main categories: continuous-time DyGNNs and discrete-time DyGNNs \cite{kazemi2020representation}. Continuous-time DyGNNs view a dynamic graph as a flow of edges with specific timestamps \cite{nguyen2018continuous,jin2022neural}. Discrete-time DyGNNs view a dynamic graph as a series of snapshots with timestamps \cite{sankar2020dysat,hajiramezanali2019variational,zhang2022dynamic}.
Continuous-time DyGNNs \cite{rossi2020temporal,xu2020inductive,wang2021inductive,jin2022neural,wu2024feasibility} capture temporal dynamic information by time-encoding techniques and extract spatial structure features by GNN or memory module.
Discrete-time DyGNNs \cite{seo2018structured,hajiramezanali2019variational,pareja2020evolvegcn,sankar2020dysat,yang2021discrete,zhang2023dyted,bai2023hgwavenet} capture spatial-temporal patterns by utilizing GNNs and sequence models to separately capture the structural information and temporal dependencies of dynamic graphs. Although these methods have achieved significant success in the dynamic graph learning domain, most existing methods ignore distribution shifts that commonly exist in dynamic graphs, which makes them fail to generalize to dynamic graph OOD scenarios.

\subsection{Graph Out-of-Distribution Generalization}
Research on graph OOD generalization has attracted significant interest from the graph machine learning community due to the distribution shifts commonly existing in real-world graph data \cite{gui2022good,li2022out}.
Most current mainstream methods can be mainly divided into two types: disentanglement-based graph OOD models and causality-based graph OOD models \cite{li2022out}. Disentanglement-based graph OOD models aim to separate these informative factors from the complex graph environment and map these factors into vector representations to enhance graph OOD generalization performance\cite{yang2020factorizable,liu2020independence,fan2022debiasing,li2022disentangled,wang2024disentangled}. Causality-based graph OOD models inherently capture causal relationships between graph and labels which are invariant and stable across different environment distribution, thereby achieving good OOD generalization \cite{wu2022dir,wu2022handling,li2022learning,chen2022learning,gui2023joint,jia2024graph,wu2024graph}. Although these methods have achieved good generalization in node-level and graph-level tasks on static graphs, they overlook the fact that real-world graphs are dynamic. Therefore, studying the OOD generalization problem in dynamic graphs is more challenging but of high importance as it describes how the real dynamic system interacts and evolves. 

\subsection{Dynamic Graph Out-of-Distribution Generalization}
OOD generalization in dynamic graphs has garnered increasing research interest, with most existing approaches focusing on the perspective of causal invariance \cite{zhang2022dynamic,zhang2023spectral,yuan2023eagle,zhang2023out,zhang2024disentangled}. DIDA \cite{zhang2022dynamic} leverages decoupled dynamic graph attention to extract invariant spatiotemporal patterns, while SILD \cite{zhang2023spectral} approaches the problem from a spectral domain perspective to handle distribution shifts unobservable in the time domain. However, both methods rely on limited observed data to perform causal interventions, which undermines their generalization capabilities in complex OOD scenarios. EAGLE \cite{yuan2023eagle} enhances generalization by inferring latent environment distributions to generate environment instances, but it overlooks the intrinsic evolutionary relationships within dynamic graphs. I-DIDA \cite{zhang2024disentangled} extends DIDA to handle spatio-temporal distribution shifts in sequential recommendation by discovering and utilizing invariant patterns. EpoD \cite{yang2024improving} proposes a self-prompted learning mechanism to infer underlying environment variables and a structural causal model with dynamic subgraphs to capture the effect of environment variable shifts. OOD-Linker \cite{tieu2025oodlinker} leverages the information bottleneck method to extract relevant information from complex data structures and utilizes an error-bound invariant link selector to distinguish between invariant and variant components. However, none of them can address all three challenges simultaneously, and face various limitations in complex OOD scenarios.



\end{document}